\newtheorem{theorem}{Theorem}[section]
\newtheorem{lemma}[theorem]{Lemma}
\title{Descent Steps of a Relation-Aware Energy Produce Heterogeneous Graph Neural Networks}
\author{%
  Hongjoon Ahn\textsuperscript{\rm 1}\thanks{Work completed during an internship at the AWS Shanghai AI Lab.},\ \ Yongyi Yang\textsuperscript{\rm 2}\footnotemark[1] ,\ \ Quan Gan \textsuperscript{\rm 3},\ \  Taesup Moon \textsuperscript{\rm 1}\thanks{Co-corresponding author.}\ \ and David Wipf \textsuperscript{\rm 3}\footnotemark[2] \\
  \textsuperscript{\rm 1} ECE/IPAI/ASRI/INMC, Seoul National University, 
  \textsuperscript{\rm 2} University of Michigan, 
  \textsuperscript{\rm 3} Amazon Web Services \\
  \texttt{\{hong0805, tsmoon\}@snu.ac.kr}, 
  \texttt{yongyi@umich.edu}, \\
  \texttt{quagan@amazon.com}, 
  \texttt{davidwipf@gmail.com}
}
\newcommand{\mcG}{\mathcal{G}} 
\newcommand{\mcV}{\mathcal{V}} 
\newcommand{\mcE}{\mathcal{E}} 
\newcommand{\mcS}{\mathcal{S}} 
\newcommand{\mcT}{\mathcal{T}} 
\newcommand{\mcH}{\mathcal{H}} 
\newcommand{\mcW}{\mathcal{W}} 
\newcommand{\mbY}{\mathbf{Y}} 
\newcommand{\mbX}{\mathbf{X}} 
\newcommand{\mbA}{\mathbf{A}} 
\newcommand{\mbL}{\mathbf{L}} 
\newcommand{\mbD}{\mathbf{D}} 
\newcommand{\mbP}{\mathbf{P}} %
\newcommand{\mbQ}{\mathbf{Q}} %
\newcommand{\mbI}{\mathbf{I}} 
\newcommand{\mbW}{\mathbf{W}} 
\newcommand{\mbH}{\mathbf{H}} 
\newcommand{\mbZ}{\mathbf{Z}} 
\newcommand{\mbC}{\mathbf{C}} 
\newcommand{\mbB}{\mathbf{B}}
\newcommand{\mbM}{\mathbf{M}} 
\newcommand{\bprox}{\textbf{prox}} 
\newcommand{\eg}{{\it e.g.}}
\newcommand{\ie}{{\it i.e.}}
\begin{document}

\maketitle

\begin{abstract}

Heterogeneous graph neural networks (GNNs) achieve strong performance on node classification tasks in a semi-supervised learning setting. However, as in the simpler homogeneous GNN case, message-passing-based heterogeneous GNNs may struggle to balance between resisting the oversmoothing that may occur in deep models, and capturing long-range dependencies of graph structured data. Moreover, the complexity of this trade-off is compounded in the heterogeneous graph case due to the disparate heterophily relationships between nodes of different types. To address these issues, we propose a novel heterogeneous GNN architecture in which layers are derived from optimization steps that descend a novel relation-aware energy function. The corresponding minimizer is fully differentiable with respect to the energy function parameters, such that bilevel optimization can be applied to effectively learn a functional form whose minimum provides optimal node representations for subsequent classification tasks.  In particular, this methodology allows us to model diverse heterophily relationships between different node types while avoiding oversmoothing effects.  Experimental results on 8 heterogeneous graph benchmarks demonstrates that our proposed method can achieve competitive node classification accuracy. The source code of our algorithm is available at \href{https://github.com/hongjoon0805/HALO}{\textcolor{magenta}{https://github.com/hongjoon0805/HALO}}
\end{abstract}



\section{Introduction}

Graph structured data, which contains information related to entities as well as the interactions between them, arise in various applications such as social networks or movie recommendations. In a real world scenario, these entities and interactions are often multi-typed, such as \textit{actor}, \textit{director}, \textit{keyword}, and \textit{movie} in a movie review graph \cite{(IMDB)Maas2011} or \textit{authors}, \textit{papers}, \textit{terms}, and \textit{venue} in an academic network \cite{(Academic)Tang2008}.  Hence the relationships between entities can potentially be much more complex than within a traditional homogeneous graph scenario, in which only a single node and edge type exists. 
To this end, \textit{heterogeneous} graphs have been proposed as a practical and effective tool to systematically deal with such multi-typed graph-structured data.
With the advent of graph neural networks (GNN) for instantiating deep models that are sensitive to complex relationships between data instances, it is natural that numerous variants have been developed to explicitly handle heterogeneous graphs \cite{cen2019representation,(metapath2vec)Dong2017,(MAGNN)Fu2020,(GPT-GNN)Mu2020,li2021leveraging,(HGB)Lv2021,(R-GCN)Schlichtkrull2018,(HAN)Wang2019,yun2019graph}, with promising performance on downstream tasks such as node classification or link prediction. 

In this vein, message-passing GNNs represent one of the most widely-adopted heterogeneous architectures, whereby a sequence of type-specific graph propagation operators allow neighboring nodes to share and aggregate heterogeneous information \cite{(MAGNN)Fu2020,(R-GCN)Schlichtkrull2018,(HAN)Wang2019}.
However, in some cases where long-range dependencies exist between nodes across the graph, stacking too many layers to reflect a larger neighborhood may contribute to a well-known  \textit{oversmoothing} problem, in which node features converge to similar, non-discriminative embeddings \cite{Li2018,(Lose)OonoSuzuki2020}. In the case of homogeneous graphs, this oversmoothing problem has been addressed (among other ways) using modified GNN architectures with layers patterned after the unfolded descent steps of some graph-regularized energy \cite{(APPNP)Gasteiger2019, (TWIRLS)Yang2021}.  Provided minimizers of this energy remain descriminative for the task at hand, arbitrarily deep networks can be trained without risk of oversmoothing, with skip connections between layers organically emerging to favor node-specific embeddings.  However, extending this approach to handle heterogeneous graphs is not straightforward, largely because node/edge heterogeneity (\eg, node-specific numbers of classes or feature dimensions,  multiple types of relationships between nodes) is not easily reflected in the vanilla energy functions adopted thus far for the homogeneous case.  In particular, existing energy functions are mostly predicated on the assumption of graph homophily, whereby neighboring nodes have similar labels.  But, in the heterogeneous case, this assumption is no longer realistic because the labels between nodes of different types may exhibit complex relationships.







To this end, we first introduce a novel heterogeneous GNN architecture with layers produced by the minimization of a relation-aware energy function. This energy produces regularized heterogeneous node embeddings, relying on relation-dependent compatibility matrices as have been previously adopted in the context of belief propagation on heterogeneous graphs, e.g., ZooBP \cite{(ZooBP)Eswaran2017}. Secondly, we derive explicit propagation rules for our model layers by computing gradient descent steps along the proposed energy, with quantifiable convergence conditions related to the learning rate being considered as well. Furthermore, we provide interpretations of each component of the resulting propagation rule, which consists of skip-connections, linearly transformed feature aggregation, and a self-loop transformation. 
With respect to experiments, our contributions are twofold. First, we show that our algorithm outperforms state-of-the-art baselines on various benchmark datasets. And secondly, we analyze the the effectiveness of using the compatibility matrix and the behavior of the unfolding step as the number of propagations increases.





\vspace{-.2cm}
\section{Existing Homogeneous GNNs from Unfolded Optimization} \label{sec:homogeneous_unfolding}
\vspace{-.2cm}
Consider a \textit{homogeneous} graph $\calG = \{\calV,\calE\}$, with $n = |\calV|$ nodes and $m=|\calE|$ edges.  We define $\mbL \in \mathbb{R}^{n\times n}$ as the Laplacian of $\calG$, meaning $\mbL = \mbD-\mbA$, in which $\mbD$ and $\mbA$ are degree and adjacency matrices, respectively.  Unfolded GNNs incorporate graph structure by attaining optimized embeddings $\mbY^* \in \mathbb{R}^{n\times d}$ that are functions of some adjustable weights $\mbW$, i.e., $\mbY^* \equiv \mbY^*(\mbW)$,\footnote{For brevity, we will frequently omit this dependency on $\mbW$ when the context is clear.} where $\partial \mbY^*(\mbW)/ \partial \mbW$ is computable by design. These embeddings are then inserted within an application-specific loss 
\vspace*{-0.0cm}
\begin{equation} \label{eq:meta_loss}
\ell_{\btheta}\left(\btheta,\mbW \right) \triangleq \sum_{i=1}^{n'} \calD\big( g\left[  \bm y_i^*(\mbW); \btheta \right], \bt_i  \big),
\end{equation}
\vspace*{-0.0cm}
in which $g : \mathbb{R}^d \rightarrow \mathbb{R}^c$ is some differentiable node-wise function with parameters $\btheta$ and $c$-dimensional output tasked with predicting ground-truth node-wise targets $\bt_i \in \mathbb{R}^c$.  Additionally, $\bm y_i^*(\mbW)$ is the $i$-th row of $\mbY^*(\mbW)$,  $n' < n$ is the number of labeled nodes (we assume w.l.o.g.~that the first $n'$ nodes are labeled), 
and $\calD$ is a discriminator function, \eg, cross-entropy for node classification, as will be our focus. Given that $\mbY^*(\mbW)$ is differentiable by construction, we can optimize $\ell_{\btheta}\left(\btheta,\mbW \right)$ via gradient descent (over \textit{both} $\btheta$ and $\mbW$ if desired) to obtain our final predictive model.

At a conceptual level, what differentiates GNNs inspired by unfolded optimization is how $\mbY^*(\mbW)$ is obtained.  Specifically, these node embeddings are chosen to be the minimum of a lower-level, graph-regularized energy function \cite{chen2021does,ma2020unified,pan2021a,(TWIRLS)Yang2021,thatpaper,zhu2021interpreting}.  Orignally inspired by \cite{Zhou2003}, the most common selection for this energy is given by
\begin{equation} \label{eq:basic_objective}
\ell_{\mbY}(\mbY) \triangleq  \left\|\mbY - f\left(\mbX ; \mbW  \right) \right\|_{\calF}^2 + \lambda \mbox{tr}\left[\mbY^\top \mbL \mbY  \right],
\end{equation}
where $\lambda > 0$ is a trade-off parameter, $\mbY \in \mathbb{R}^{n\times d}$ is a matrix of trainable embeddings, meaning $d$-dimensional features across $n$ nodes, and $f\left(\mbX;\mbW\right)$ is a base model parameterized by $\mbW$ that produces initial target embeddings using the $d_0$-dimensional node features $\mbX \in \mathbb{R}^{n\times d_0}$.  Minimization of (\ref{eq:basic_objective}) over $\mbY$ results in new node embeddings $\mbY^*$ characterized by a balance between similarity with  $f\left(\mbX ; \mbW  \right)$ and smoothness across graph edges as favored by the trace term $\mbox{tr}\left[\mbY^\top \mbL \mbY \right] =  \sum_{\{i,j\} \in \calE}\left\| \bm y_i - \bm y_j \right\|_2^2$, where $\bm y_i$ is $i$-th row of $\mbY$.  While (\ref{eq:basic_objective}) can be solved in a closed-form as
\begin{equation} \label{eq:closed form}
\mbY^*(\mbW) = \arg\min_{\mbY} \ell_\mbY(\mbY ) = \left(\mbI + \lambda \mbL \right)^{-1} f(\mbY;\mbW),
\end{equation}
for large graphs this analytical solution is not practically computable.  Instead, we may form initial node embeddings $\mbY^{(0)} = f\left(\mbX ; \mbW  \right)$, and then apply gradient descent to converge towards the minimum, with the $k$-th iteration given by
\begin{equation} \label{eq:basic_grad_step}
\mbY^{(k)} = \mbY^{(k-1)} - \alpha\left[ \left( \lambda \mbL  + \mbI\right) \mbY^{(k-1)} - f\left(\mbX ; \mbW  \right) \right], 
\end{equation}
in which $\frac{\alpha}{2}$ is the step size.  Given that $\mbL$ is generally sparse, computation of (\ref{eq:basic_grad_step}) can leverage efficient sparse matrix multiplications.  Combined with the fact that the loss is strongly convex, and provided that $\alpha$ is chosen suitably small, we are then guaranteed to converge towards the unique global optima with efficient gradient steps, meaning that for some $K$ sufficiently large $\mbY^{(K)}(\mbW) \approx \mbY^*(\mbW)$.

Critically, $\mbY^{(K)}(\mbW)$ remains differentiable with respect to $\mbW$, such that we may substitute this value into (\ref{eq:meta_loss}) in place of $\mbY^*(\mbW)$ to obtain an entire pipeline that is end-to-end differentiable w.r.t.~$\mbW$.  Moreover, per the analyses from \cite{ma2020unified,pan2021a,(TWIRLS)Yang2021,thatpaper,zhu2021interpreting}, the $k$-th \textit{unfolded} iteration of (\ref{eq:basic_grad_step}) can be interpreted as an efficient form of GNN layer.  In fact, for certain choices of the step size, and with the incorporation of gradient pre-conditioning and other reparameterization factors, these iterations can be exactly reduced to popular canonical GNN layers such as those used by GCN \cite{(GCN)Kipf2017}, APPNP \cite{(APPNP)Gasteiger2019}, and others.
However, under broader settings, we obtain unique GNN layers that are naturally immune to oversmoothing, while nonetheless facilitating long-range signal propagation across the graph by virtue of the anchoring effect of the underlying energy function.  In other words, regardless of how large $K$ is, these iterations/layers will converge to node embeddings that adhere to the design criteria of (\ref{eq:basic_objective}).  This is the distinct appeal of GNN layers motivated by unfolded optimization, at least thus far in the case of homogeneous graphs.

\vspace{-.2cm}
\section{New Heterogeneous GNN Layers via Unfolding}
\vspace{-.2cm}

In this section, we explore novel extensions of unfolded optimization to handle heterogeneous graphs, which, as mentioned previously, are commonly encountered in numerous practical application domains.  After introducing our adopted notation, we first describe two relatively simple, intuitive attempts to accommodate heterogeneous node and edge types.  We then point out the shortcomings of these models, which are subsequently addressed by our main proposal: a general-purpose energy function and attendant proximal gradient steps that both (i) descend the aforementioned energy, and (ii) in doing so facilitate flexible node- and edge-type dependent message passing with non-linear activations where needed.  The result is a complete, interpretable heterogeneous GNN (HGNN) architecture with layers in one-to-one correspondence with the minimization of a heterogeneous graph-regularized energy function.


Before we begin, we introduce some additional notation.  An undirected heterogeneous graph $\mcG=(\mcV, \mcE)$ is a collection of node types $\mcS$ and edge types $\mcT$ such that $\mathcal{V} = \bigcup_{s\in\mcS} \mcV_s$ and $\mcE = \bigcup_{t\in\mcT} \mcE_t$.  Here $\mcV_s$ denotes a set of $n_s = |\calV_s|$ nodes of type $s$, while $\mcE_t$ represents a set of edge type $t$. Furthermore, we use $\mcT_{ss'}$ to refer to the set of edge types connecting node types $s$ and $s'$. Note that $\mcT$ includes both the canonical direction, $t \in \mcT_{ss'}$, and the inverse direction, $t_{\text{inv}} \in \mcT_{s's}$, that corresponds with a type $t$ edge.\footnote{For example, if $t\in\mcT_{ss'}$ is ("paper"-"author"), then $t_{\text{inv}}\in \mcT_{s's}$ is ("author"-"paper")}

\vspace{-.2cm}
\subsection{First Attempts at Heterogeneous Unfolding}
\vspace{-.2cm}

A natural starting point for extending existing homogeneous models to the heterogeneous case would be to adopt the modified energy
\begin{equation} \label{eq:first_simple_hetero_energy}
    \ell_{\mbY}(\mbY) \triangleq  \sum_{s\in\mcS}\left[\frac{1}{2}\left\|\mbY_s-f(\mbX_s;\mbW_s)\right\|_{\mathcal{F}}^2 + \frac{\lambda}{2}\sum_{s'\in\mcS}\sum_{t\in\mcT_{ss'}} \mbox{tr}\left(\mbY^\top \mbL_t \mbY\right) \right], 
\end{equation}
in which $\mbL_t$ is the graph Laplacian involving all edges of relation type $t$ and $\mbW_s$ parameterizes a type-specific transformation of node representations. Here, we have simply introduced type-specific transformations of the initial node representations, as well as adding a separate graph regularization term for each relation type $t$.  However, this formulation is actually quite limited as it can be reduced to the equivalent energy 
\begin{equation} \label{eq:energy_simplification}
    \ell_{\mbY}(\mbY) \equiv  \frac{1}{2}\left\|\mbY-\widetilde{f}(\widetilde{\mbX};\mcW)\right\|_{\calF}^2 + \frac{\lambda}{2}  \mbox{tr}\left(\mbY^\top \mbC \mbY\right),~~\mbox{with}~~ \mbC = \sum_{s\in\mcS}\sum_{s'\in\mcS}\sum_{t\in\mcT_{ss'}} \mbL_t,
\end{equation}
in which $\widetilde{\mbX}$ represents the original node features concatenated with the node type, and $\widetilde{f}$ is some function of the augmented features parameterized by $\mcW \triangleq \{\mbW_s\}_{s\in\mcS}$.  Note that by construction, $\mbC$ will necessarily be positive semi-definite, and hence we may conclude through  (\ref{eq:energy_simplification}) that  (\ref{eq:first_simple_hetero_energy}) defaults to the form of a standard quadratically-regularized loss, analogous to (\ref{eq:basic_objective}), that has often been applied to graph signal processing problems \cite{ioannidis2018}.  Therefore the gradient descent iterations of this objective will closely mirror the existing homogeneous unfolded GNN architectures described in Section \ref{sec:homogeneous_unfolding}, and fail to capture the nuances of heterogeneous data.

To break the symmetry that collapses all relation-specific regularization factors into $\mbox{tr}\left(\mbY^\top \mbC \mbY\right)$, a natural option is to introduce $t$-dependent weights as in
\begin{equation} \label{eq:second_simple_hetero_energy}
    \ell_{\mbY}(\mbY) \triangleq  \sum_{s\in\mcS}\left[\frac{1}{2}\left\|\mbY_s-f(\mbX_s;\mbW_s)\right\|_{\mathcal{F}}^2 + \frac{\lambda}{2}\sum_{s'\in\mcS}\sum_{t\in\mcT_{ss'}} \mbox{tr}\left(\mbY^\top \mbL_t \mbY \mbM_t \right) \right], 
\end{equation}
in which the set $\{\mbM_t \}$ is trainable.
While certainly more expressive than  (\ref{eq:first_simple_hetero_energy}), this revision is nonetheless saddled with several key limitations:  (i) Unless additional constraints are included on $\{\mbM_t\}$ (e.g., PSD, etc.),  (\ref{eq:second_simple_hetero_energy}) may be unbounded from below; (ii) While $\mbM_t$ may be asymmetric, w.l.o.g.~the penalty can be equivalently expressed with symmetric weights, and hence the true degrees of freedom are limited;\footnote{Note that $\mbM_t = \frac{1}{2}[(\mbM_t+\mbM_t^{\top})+(\mbM_t-\mbM_t^{\top})]$ for any weight matrix $\mbM_t$. Since $\bm y^{\top}(\mbM_t-\mbM_t^{\top})\bm y=0$ for any $\bm y$ and $\mbM_t$, the penalty actually only depends on the symmetric part $(\mbM_t+\mbM_t^{\top})$.} (iii) This model could be prone to overfitting,  since during training it could be that $\mbM_t \rightarrow 0$ in which case the graph-based regularization is turned off;\footnote{For example, suppose the node features of training nodes are highly correlated with the labels, or actually are the labels (as in some prior label propagation work). Then the model could just learn $\mbM_t = 0$ and achieve perfect reconstruction on the training nodes to produce zero energy, and yet overfit since graph propagation is effectively turned off.} and (iv) Because  $\mbox{tr}\left[\mbY^\top \mbL_t \mbY  \mbM_t \right] = \sum_{(i,j) \in \calE_t} \left( \bm y_i - \bm y_j \right)^\top \mbM_t \left(\bm y_i - \bm y_j \right)$, the energy function (\ref{eq:second_simple_hetero_energy}) is symmetric w.r.t.~the order of the nodes in the regularization term, and therefore any derived message passing must also be symmetric, \ie, we cannot exploit asymmetric penalization aligned with heterogeneous relationships in the data.

Given then that both  (\ref{eq:first_simple_hetero_energy}) and  (\ref{eq:second_simple_hetero_energy}) have notable shortcomings, it behooves us to consider revised criteria for selecting a suitable heterogeneous energy function.  We explore such issues next. 

\vspace{-.2cm}
\subsection{A More Expressive Alternative Energy Function}
\vspace{-.2cm}

As an entry point for developing a more flexible class of energy functions that is sensitive to nuanced relationships between different node types, we consider previous work developing various flavors of both label and belief propagation \cite{(ZooBP)Eswaran2017,pearl2022reverend,(CAMLP)Yuto2016,Zhou2003}.  In the more straightforward setting of homogeneous graphs under homophily conditions, label propagation serves as a simple iterative process for smoothing known training labels across edges to unlabeled nodes in such a way that, for a given node $i$, the predicted node label $\hat{\bt}_i$ will approximately match the labels of neighboring nodes $\calN_i$, \ie, $\hat{\bt}_i \approx \hat{\bt}_j$ when $j \in  \calN_i$.  An analogous relationship holds when labels are replaced with beliefs \cite{pearl2022reverend}.  

However, in broader regimes with varying degrees of heterophily, it is no longer reasonable for such a simple relationship to hold, as neighboring nodes may be more inclined to have \textit{dissimilar} labels.  To address this mismatch, a compatibility matrix $\mbH$ must be introduced such that now label propagation serves to instantiate $\hat{\bt}_i \approx \hat{\bt}_j \mbH $ for $j \in \calN_i$ \cite{(CAMLP)Yuto2016} (here we treat each $\hat{\bt}_j$ as a row vector).  If each predicted label $\hat{\bt}_i$ is approximately a one-hot vector associated with class membership probabilities, then the $(k,l)$-th element of $\mbH$ roughly determines how a node of class $k$ influences neighbors of class $l$.  And if we further extend to heterogeneous graphs as has been done with various forms of belief propagation \cite{(ZooBP)Eswaran2017}, it is natural to maintain a unique (possibly non-square) compatibility matrix $\mbH_t$ for each relation type $t$, with the goal of finding predicted labels (or beliefs) of each node type satisfying $\hat{\bt}_{si} \approx   \hat{\bt}_{s'j} \mbH_t$ for $s,s' \in \calS$, $t \in \calT_{ss'}$, and $(i,j) \in \calE_t$. From a practical standpoint, each such $\mbH_t$ can be estimated from the data using the statistics of the labels (or beliefs) of nodes sharing edges of type $t$.

Returning to our original goal, we can use similar intuitions to guide the design of an appropriate regularization factor for learning heterogenerous graph node representations (as required by HGNNs). Specifically, given $s,s' \in \calS$, $t \in \calT_{ss'}$, and $(i,j) \in \calE_t$, we seek to enforce $\bm y_{si} \mbH_t \approx \bm y_{s'j}$, which then naturally motivates the energy
\begin{align}
    \ell_{\mbY}(\mbY) \triangleq & \sum_{s\in\mcS}\left[\frac{1}{2}||\mbY_s-f(\mbX_s;\mbW_s)||_{\mathcal{F}}^2 + \frac{\lambda}{2}\sum_{s'\in\mcS}\sum_{t\in\mcT_{ss'}}\sum_{(i,j)\in\mcE_t}||\bm y_{si}\mbH_t-\bm y_{s'j}||_2^2 \right], \label{eq:energy}
\end{align}
where $\lambda$ denotes a trade-off parameter, $\mbX_s \in \mathbb{R}^{n_s \times d_{0s}}$ represents $d_{0s}$-dimensional initial features and $\mbY_s \in \mathbb{R}^{n_s \times d_s}$ is the embedding of $d_s$-dimensional features on $n_s$ nodes for node type $s$. Note that $\bm y_{si}$ and $\bm y_{s'j}$ are the $i$-th and $j$-th row of $\mbY_s$ and $\mbY_{s'}$, respectively. Additionally, $\mbH_t \in \mathbb{R}^{d_s \times d_{s'}}$ denotes a compatibility matrix that matches the dimension of two different node embeddings (\ie, $\bm y_{si}$ and $\bm y_{s'j}$) via linear transformation for edge type $t$.

The energy function (\ref{eq:energy}) exhibits the following two characteristics:
\begin{itemize}
    \item The first term prefers that the embedding $\mbY_s$ should be close to that from $f(\mbX_s;\mbW_s)$.
    \item The second term prefers that for two nodes of type $s$ and $s'$ connection by an edge of type $t$, the embeddings $\bm y_{si} \mbH_t$ and $\bm y_{s'j}$ should be relatively close to one another.
\end{itemize}
Even when the dimensions of two embeddings are the same (\ie, $d_s = d_{s'}$), $\mbH_t$ still serves an important role in allowing the embeddings to lie within different subspaces to obtain compatibility. 

\vspace{-.1cm}
\subsection{Analysis and Derivation of Corresponding Descent Steps}
\vspace{-.1cm}

The objective (\ref{eq:energy}) can be written as a matrix form as follows:
\begin{align}
    \ell_{\mbY}(\mbY) &= \sum_{s\in\mcS} \Bigg[\frac{1}{2} ||\mbY_s-f(\mbX_s;\mbW_s)||_{\mathcal{F}}^2 \nonumber\\
    &+ \frac{\lambda}{2} \sum_{s'\in\mcS}\sum_{t\in\mcT_{ss'}} \text{tr}((\mbY_s\mbH_t)^{\top}\mbD_{st}(\mbY_s\mbH_t)-2(\mbY_s\mbH_t)^{\top}\mbA_{t}\mbY_{s'}+\mbY_{s'}^{\top}\mbD_{s't_{\text{inv}}}\mbY_{s'})\Bigg], \label{eq:energy_matrix}
\end{align}
where $\mbA_t$ denotes the adjacency matrix for an edge type $t$, and $\mbD_{st}$ denotes the type-$t$ degree matrix of type-$s$ nodes. 

The closed-form optimal point of  (\ref{eq:energy_matrix}) is provided by the following result; see the Supplementary Materials for the proof.
\begin{lemma}\label{lemma:closed-form} The unique solution $\mbY^* (\mcW, \mcH)$ minimizing  (\ref{eq:energy_matrix}) satisfies
\begin{align}
    vec(\mbY^* (\mcW, \mcH)) = (\mbI + \lambda(\mbQ-\mbP+\mbD))^{-1}vec(\widetilde{f}(\widetilde{\mbX};\mcW)), \label{eq:closed_form}
\end{align}
where $vec(\mbB)=[\mathbf{b}_1^{\top}, ..., \mathbf{b}_n^{\top}]^{\top}$ for matrix $\mbB$, $\mcH \triangleq \{\mbH_t\}_{t\in\mcT} $ denotes the set of all compatibility matrices, and we have defined the matrices $\mbP$, $\mbQ$, and $\mbD$ as
\begin{align}
    \mbP = \left[ \begin{array}{ccc}
        \mbP_{11} & ... & \mbP_{1|\mcS|}  \\
        ... & ... & ...  \\
        \mbP_{|\mcS|1} & ... & \mbP_{|\mcS||\mcS|} 
    \end{array} \right] ; \mbP_{ss'} = \sum_{t\in\mcT_{ss'}} ((\mbH_t^{\top}+\mbH_{t_{\text{inv}}}) \otimes \mbA_t) \nonumber
\end{align}

\begin{align}
    \mbQ = \bigoplus_{s\in\mcS} \mbQ_s; \mbQ_s=\sum_{s'\in\mcS}\sum_{t\in\mcT_{ss'}} (\mbH_t\mbH_t^{\top}\otimes\mbD_{st}), \mbD = \bigoplus_{s\in\mcS}\mbI\otimes\mbD_s. \nonumber
\end{align}
Here $\otimes$ denotes the Kronecker product, $\bigoplus_{i=1}^n \mbA_i = diag(\mbA_1,...,\mbA_n)$ denotes a direct sum of $n$ square matrices $\mbA_1,...,\mbA_n$, and $\mbD_s \triangleq \sum_{s'\in\mcS}\sum_{t\in\mcT_{ss'}} \mbD_{s't}$ represent a sum of degree matrices over all node types $s' \in \mcS$ and all edge types $t \in \mcT_{ss'}$.
\end{lemma}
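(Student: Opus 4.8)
The plan is to exploit that (\ref{eq:energy_matrix}) is a strictly convex quadratic in the stacked embeddings $\mbY = \{\mbY_s\}_{s\in\mcS}$, so that its unique minimizer is characterized by the single stationarity condition $\partial \ell_{\mbY}/\partial \mbY = \bzero$. Concretely, I would (i) differentiate $\ell_{\mbY}$ with respect to each block $\mbY_s$ to obtain a coupled system of matrix (Sylvester-type) equations; (ii) vectorize this system using the identity $vec(\mbA\mbX\mbB) = (\mbB^\top\otimes\mbA)\,vec(\mbX)$ to collapse it into one linear system in $vec(\mbY)$; and (iii) read off the coefficient matrix as $\mbI + \lambda(\mbQ-\mbP+\mbD)$ and invert. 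Uniqueness and invertibility then come essentially for free: the regularizer in (\ref{eq:energy}) is a sum of squared norms, so its Hessian $\lambda(\mbQ-\mbP+\mbD)$ is positive semidefinite, whence the full Hessian $\mbI + \lambda(\mbQ-\mbP+\mbD)$ is positive definite and in particular invertible. This single fact simultaneously certifies strict convexity, uniqueness of $\mbY^*$, and existence of the inverse appearing in (\ref{eq:closed_form}).

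The computational core is the gradient in step (i). The subtlety is that a fixed block $\mbY_a$ plays two roles in the double sum $\sum_{s}\sum_{s'}$: it is the ``source'' factor $\mbY_s$ when the outer index $s=a$, and the ``target'' factor $\mbY_{s'}$ when the inner index $s'=a$, so both contributions must be collected. Using the standard trace derivatives $\partial\,\text{tr}(\mbH^\top\mbY^\top\mbD\mbY\mbH)/\partial\mbY = 2\mbD\mbY\mbH\mbH^\top$ for symmetric $\mbD$, $\partial\,\text{tr}(\mbM^\top\mbY)/\partial\mbY = \mbM$, and $\partial\,\text{tr}(\mbY^\top\mbD\mbY)/\partial\mbY = 2\mbD\mbY$, the source role of $\mbY_a$ yields the self-term $\mbD_{at}\mbY_a\mbH_t\mbH_t^\top$ and the cross-term $-\mbA_t\mbY_{s'}\mbH_t^\top$, while the target role yields the degree term $\mbD_{at_{\text{inv}}}\mbY_a$ and the cross-term $-\mbA_t^\top\mbY_s\mbH_t$. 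Adding the fidelity gradient $\mbY_a - f(\mbX_a;\mbW_a)$ and setting it to zero gives the stationarity equation for each node type.

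For step (ii), I would vectorize each term. The source self-terms give $\sum_{s'}\sum_{t\in\mcT_{as'}}(\mbH_t\mbH_t^\top\otimes\mbD_{at})$, which assemble into the block-diagonal $\mbQ = \bigoplus_s \mbQ_s$; the target degree terms give $\mbI\otimes\mbD_a$, assembling into $\mbD = \bigoplus_s \mbI\otimes\mbD_s$ after reindexing $t\mapsto t_{\text{inv}}$ to rewrite the sum as one over edge types incident to type-$a$ nodes. The two families of cross-terms populate the off-diagonal blocks: the source cross-term for relation $t\in\mcT_{ss'}$ and the target cross-term arising from the inverse relation $t_{\text{inv}}\in\mcT_{s's}$ both land in block $(s,s')$ and, after applying $\mbA_{t_{\text{inv}}} = \mbA_t^\top$ together with the Kronecker identity, combine into $\sum_{t\in\mcT_{ss'}}\big((\mbH_t^\top + \mbH_{t_{\text{inv}}})\otimes\mbA_t\big) = \mbP_{ss'}$. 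Stacking all blocks gives $\big(\mbI + \lambda(\mbQ-\mbP+\mbD)\big)\,vec(\mbY^*) = vec(\widetilde{f}(\widetilde{\mbX};\mcW))$, and inverting the (positive definite, hence invertible) coefficient matrix delivers (\ref{eq:closed_form}).

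The main obstacle is bookkeeping rather than conceptual: correctly pairing each forward relation $t$ with its inverse $t_{\text{inv}}$ so that their separate gradient contributions to a given block coalesce into the symmetrized compatibility factor $\mbH_t^\top + \mbH_{t_{\text{inv}}}$, and keeping straight which Kronecker factor (the feature factor $\mbH_t$ versus the node factor $\mbA_t$) is transposed under the chosen $vec$ convention, as well as the dimension matching of the non-square blocks. Getting this pairing right is precisely what makes $\mbQ-\mbP+\mbD$ symmetric, consistent with its being the Hessian of the scalar energy and underpinning the positive-definiteness argument invoked for uniqueness and invertibility above.
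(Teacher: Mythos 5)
Your proposal follows essentially the same route as the paper's proof: compute the block gradients $\nabla_{\mbY_s}\ell_{\mbY}$, vectorize each term via the identity $vec(\mbA\mbX\mbB)=(\mbB^{\top}\otimes\mbA)vec(\mbX)$ (Roth's column lemma), stack the $|\mcS|$ equations into $(\mbI+\lambda(\mbQ-\mbP+\mbD))\,vec(\mbY)=vec(\widetilde{f}(\widetilde{\mbX};\mcW))$, and solve the stationarity condition. Your explicit positive-definiteness argument for invertibility and uniqueness is a small rigor improvement over the paper, which merely invokes convexity before writing down the inverse, but the underlying argument is the same.
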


We can interpret this solution as transforming $\widetilde{f}(\widetilde{\mbX};\mcW)$ into an embedding that has both local and global information of the graph structure \cite{Zhou2003}. Therefore, this can be treated as an appropriate graph-aware embedding for carrying out a downstream task such as node classification.


However, for practically-sized graph data, computing the inverse $(\mbI + \lambda(\mbQ - \mbP + \mbD))^{-1}$ in  (\ref{eq:closed_form}) is prohibitively expensive. To resolve this issue, similar to the homogeneous case we instead apply gradient descent to  (\ref{eq:energy_matrix}) over $\mbY_s$ for all node types $s \in \mcS$ to approximate $\mbY^* (\mcW, \mcH)$. Starting from the initial point $\mbY_s^{(0)}=f(\mbX_s;\mbW_s)$ for each $s \in \mcS$, several steps of gradient descent are performed to obtain an appropriate embedding for carrying out a downstream task. Since all the intermediate steps of gradient descent are differentiable features w.r.t $\mcW$ and $\mcH$, we can just plug-in the result of performing $K$ iterations of gradient descent, $\mbY^{(K)}_s$, into the  (\ref{eq:loss_fn}). 

In order to get $\mbY^{(K)}_s$ using gradient descent, we first compute the gradient of  (\ref{eq:energy_matrix}) w.r.t $\mbY_s$:
\begin{align}
    \nabla_{\mbY_s}\ell_{\mbY}(\mbY) = (\mbI +\lambda\mbD_s)\mbY_s-f(\mbX_s;\mbW_s) + \lambda \sum_{s'\in\mcS}\sum_{t\in\mcT_{ss'}} \big(\mbD_{st}\mbY_s(\mbH_t\mbH_t^{\top})-\mbA_t\mbY_{s'}(\mbH_t^{\top}+\mbH_{t_{\text{inv}}})\big).\label{eq:gradient}
\end{align}
Note, if the condition number of  (\ref{eq:gradient}) is large, the convergence speed of gradient descent can be slow \cite{NocedalWright2006}. To reduce the convergence time effectively, we use the Jacobi preconditioning technique \cite{Axelsson1996}, rescaling the gradient step using $\widetilde{\mbD}_s^{-1}\triangleq (\mbI+\lambda\mbD_s)^{-1}$. Then iteration $k+1$ of gradient descent on $\mbY_s$ can be computed as
\begin{align}
    &\mbY_s^{(k+1)}= \nonumber\\
    &\underbrace{(1-\alpha)\mbY_s^{(k)}}_{(a)} + \alpha \widetilde{\mbD}_s^{-1} \big[\underbrace{f(\mbX_s;\mbW_s)}_{(b)} + \lambda \sum_{s'\in\mcS}\sum_{t\in\mcT_{ss'}} \big(\underbrace{\mbA_t\mbY_{s'}^{(k)}(\mbH_t^{\top}+\mbH_{t_{\text{inv}}})}_{(c)} - \underbrace{\mbD_{st}\mbY_s^{(k)}(\mbH_t\mbH_t^{\top})}_{(d)}\big)\big], \label{eq:unfolding_precond}
\end{align}
where $\alpha$ denotes a step size. The above expression can be considered as a forward propagation rule for the $k$-th layer in a GNN model. In  (\ref{eq:unfolding_precond}), the terms (a), (b), (c), and (d) can be interpreted as follows:

\noindent\textbf{Terms (a) \& (b)} \ \
Each term can be treated as a skip connection from the previous layer $\mbY_s^{(k)}$ and the input features $f(\mbX_s;\mbW_s)$, respectively. Unlike some previous work \cite{(Deepgcns)Guohao2019,(HGB)Lv2021} that adopted skip-connections as a heuristic solution to resolve the oversmoothing problem \cite{Li2018,(Lose)OonoSuzuki2020}, similar to unfolded GNNs in the homogeneous case \cite{(APPNP)Gasteiger2019,(TWIRLS)Yang2021}, the skip connections in our expression are naturally derived from the gradient descent step for minimizing the energy function.

\noindent\textbf{Term (c)} \ \
This term accumulates the propagated feature vectors of neighboring nodes. The matrix $\mbH_t^{\top} + \mbH_{t_{\text{inv}}}$ serves as edge-type specific transformations in a bi-directional way. Note that all matrices in $\mcH$ are shared across all propagation layers, and as a result, the number of parameters does not increase as the propagation proceeds.

\noindent\textbf{Term (d)} \ \
This term is a self-loop transformation that depends on the edge type $t$. The matrix $\mbH_t\mbH_t^{\top}$ introduces an edge-type specific transformation in a uni-directional way, and the matrix $\mbD_{st}$ reflects the strength of the self-loop transformation relying on edge type $t$. Note that, unlike prior work \cite{(HGB)Lv2021, (R-GCN)Schlichtkrull2018}, we use edge-type specific degree and transformation matrices, and these are also shared across all GNN layers.

We reiterate that all components of (\ref{eq:unfolding_precond}) have naturally emerged from the gradient descent procedure for minimizing the underlying energy function.  And in terms of convergence, we have the following result; again, the proof is deferred to the Supplementary Materials:

\begin{theorem} \label{theorem:convergence}
The iterations (\ref{eq:unfolding_precond}) are guaranteed to monotonically converge to the unique global minimum of (\ref{eq:energy}) provided that
\begin{align}
    \alpha < \frac{2 + 2\lambda d_{\text{min}}}{1 + \lambda (d_{\text{min}} + \sigma_{\text{max}})}, \label{eq:convergence}
\end{align}
where $d_{\text{min}}$ is the minimum diagonal element of $\mbD$
and $\sigma_{\text{max}}$ is a maximum eigenvalue of matrix $(\mbQ - \mbP)$.
\end{theorem}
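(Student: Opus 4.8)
The plan is to recognize the iteration (\ref{eq:unfolding_precond}) as preconditioned gradient descent on a strongly convex quadratic, and then reduce the whole convergence question to bounding a single eigenvalue. First I would read off from Lemma \ref{lemma:closed-form} that, after vectorization, the energy (\ref{eq:energy_matrix}) is the quadratic $\tfrac{1}{2}vec(\mbY)^{\top}\mbM\,vec(\mbY) - vec(\mbY)^{\top}vec(\widetilde{f}(\widetilde{\mbX};\mcW))$ up to a constant, with constant Hessian $\mbM \triangleq \mbI + \lambda(\mbQ - \mbP + \mbD)$: its gradient $\mbM\,vec(\mbY) - vec(\widetilde{f})$ has a unique root that reproduces (\ref{eq:closed_form}). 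The squared fitting terms contribute the $\mbI$ summand, so $\mbM \succ 0$ and the minimizer $\mbY^{*}$ is unique, which also secures the ``unique global minimum'' part of the claim.

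Next I would collect the per-type updates (\ref{eq:unfolding_precond}) into a single vectorized recursion. Since $\widetilde{\mbD}_s^{-1}(\mbI + \lambda\mbD_s) = \mbI$, each update equals $\mbY_s^{(k)} - \alpha\widetilde{\mbD}_s^{-1}\nabla_{\mbY_s}\ell_{\mbY}$, so writing $\widetilde{\mbD} \triangleq \mbI + \lambda\mbD$ for the block preconditioner (consistent with the Kronecker/direct-sum bookkeeping of Lemma \ref{lemma:closed-form}) gives $vec(\mbY^{(k+1)}) = (\mbI - \alpha\widetilde{\mbD}^{-1}\mbM)\,vec(\mbY^{(k)}) + \alpha\widetilde{\mbD}^{-1}vec(\widetilde{f})$. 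Under the change of variables $\bz = \widetilde{\mbD}^{1/2}vec(\mbY)$ this is \emph{ordinary} gradient descent with step $\alpha$ on a strongly convex quadratic with symmetric positive-definite Hessian $\widetilde{\mbD}^{-1/2}\mbM\,\widetilde{\mbD}^{-1/2}$, which is similar to $\widetilde{\mbD}^{-1}\mbM$. By the textbook analysis of gradient descent on such a quadratic, the error (and the objective) decreases monotonically and geometrically whenever $0 < \alpha < 2/L$, where $L$ is the largest eigenvalue of $\widetilde{\mbD}^{-1}\mbM$.

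It then remains to bound $L$. Because $\widetilde{\mbD}^{-1}\mbM = \mbI + \lambda\widetilde{\mbD}^{-1}(\mbQ - \mbP)$, and adding $\mbI$ and scaling preserve eigenvectors, $L = 1 + \lambda\nu_{\text{max}}$ with $\nu_{\text{max}}$ the largest eigenvalue of $\widetilde{\mbD}^{-1}(\mbQ - \mbP)$. This eigenvalue is real, since the matrix is similar to the symmetric $\widetilde{\mbD}^{-1/2}(\mbQ - \mbP)\widetilde{\mbD}^{-1/2}$, so the generalized Rayleigh quotient gives $\nu_{\text{max}} = \max_{\bw \neq \bzero}\frac{\bw^{\top}(\mbQ - \mbP)\bw}{\bw^{\top}\widetilde{\mbD}\bw}$. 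Bounding the numerator above by $\sigma_{\text{max}}\|\bw\|_2^2$ and the denominator below by $(1 + \lambda d_{\text{min}})\|\bw\|_2^2$ (as $\widetilde{\mbD} = \mbI + \lambda\mbD$ is diagonal with smallest entry $1 + \lambda d_{\text{min}}$) yields $\nu_{\text{max}} \le \sigma_{\text{max}}/(1 + \lambda d_{\text{min}})$, hence $L \le \frac{1 + \lambda(d_{\text{min}} + \sigma_{\text{max}})}{1 + \lambda d_{\text{min}}}$. Rearranging $\alpha < 2/L$ reproduces exactly the stated bound (\ref{eq:convergence}).

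The hard part will be the Rayleigh-quotient step, for two reasons. First, $\widetilde{\mbD}^{-1}(\mbQ - \mbP)$ is not symmetric, so one must pass to the similar symmetric matrix before invoking any eigenvalue argument and confirm the spectrum is real. Second, bounding numerator and denominator separately is only valid when $\sigma_{\text{max}} \ge 0$ (a negative numerator could otherwise break the inequality); I would justify $\sigma_{\text{max}} > 0$ by noting that $\tr(\mbQ - \mbP) > 0$ whenever the $\mbA_t$ are hollow, since then $\tr\mbP = 0$ while $\tr\mbQ = \sum \tr(\mbH_t\mbH_t^{\top})\,\tr(\mbD_{st}) > 0$. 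A secondary, purely bookkeeping obstacle is verifying that the vectorized preconditioner is precisely $(\mbI + \lambda\mbD)^{-1}$ with $\mbD$ as defined in the lemma, respecting the row/column ordering implicit in $vec(\cdot)$.
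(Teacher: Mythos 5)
Your proposal is correct and follows essentially the same route as the paper: both reduce the claim to the spectral condition $\alpha < 2/\lambda_{\max}\bigl(\widetilde{\mbD}^{-1}(\mbI+\lambda(\mbQ-\mbP+\mbD))\bigr)$ and then bound that eigenvalue by handling the $(\mbI+\lambda\mbD)$ part via $d_{\text{min}}$ and the $(\mbQ-\mbP)$ part via $\sigma_{\text{max}}$ separately (the paper phrases this as positive-definiteness of $\alpha\widetilde{\mbD}-\tfrac{\alpha^2}{2}\nabla^2\ell_{\mbY}$ obtained from the exact quadratic expansion, rather than your change of variables plus the textbook $\alpha<2/L$ rule). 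Your explicit verification that $\sigma_{\text{max}}\ge 0$ is a detail the paper's argument silently relies on (its lower-bounding step needs $\alpha-\tfrac{\alpha^2}{2}\ge 0$, which only follows from the final bound when $\sigma_{\text{max}}\ge 0$), so that is a welcome tightening rather than a deviation.
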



\noindent\textbf{Time Complexity} \ \ Although seemingly complicated, the time complexity of executing  (\ref{eq:unfolding_precond}) once is $O(md + nd^2)$, where $d=\max\{d_s|s\in \mathcal S\}$. In this way the complexity is on the same level as R-GCN \cite{(R-GCN)Schlichtkrull2018}, a commonly-used heterogeneous GNN variant, and it can be efficiently implemented through off-the-shelf sparse matrix libraries.

\vspace{-.2cm}
\subsection{Generalization to Nonlinear Activations}
\vspace{-.2cm}
In various message-passing GNNs \cite{(G-SAGE)Hamilton2017,(GCN)Kipf2017,(HGB)Lv2021,(R-GCN)Schlichtkrull2018}, node-wise ReLU activations are applied to the intermediate features during the forward propagation step. In our setting, by adding additional regularization or constraints to our original energy function and using proximal gradient descent methods \cite{(Proximal)Parikh2014}, such nonlinear activations can also be naturally derived without any heuristic modifications on our propagation step. Similar ideas have been applied to simpler homogeneous GNN models \cite{(TWIRLS)Yang2021}.
More specifically, let $\phi_s:\mathbb{R}^{d_s}\rightarrow\mathbb{R}$ be an arbitrary convex function of node embeddings. Then our optimization objective can be modified the minimization of
\vspace{-.08in}
\begin{align}
    \ell_{\mbY}(\mbY) + \sum_{s\in\mcS}\sum_{i=1}^{n_s} \phi_s (\bm y_{si}). \label{eq:energy_revised}
\end{align}
Instead of directly minimizing  (\ref{eq:energy_revised}) via vanilla gradient descent, we employ the proximal gradient descent (PGD) method \cite{(Proximal)Parikh2014} as follows.
We first denote the relevant proximal operator as
\begin{align}
    \bprox_{\phi}(\bv) \triangleq \arg \min_{\bm y} \frac{1}{2} ||\bv - \bm y||_2^2 + \phi(\bm y). \label{eq:prox_operator}
\end{align}
Then PGD iteratively minimizes  (\ref{eq:energy_revised}) by computing 
\begin{align}
    \bar{\mbY}_s^{(k+1)} &:= \mbY_s^{(k)} - \alpha \widetilde{\mbD}_s^{-1}\nabla_{\mbY_s^{(k)}}\ell_{\mbY}(\mbY) \label{eq:before_prox} \\
    \mbY_s^{(k+1)} &:= \bprox_{\phi_s}(\bar{\mbY}_s^{(k+1)}). \label{eq:after_prox}
\end{align}
For our purposes, we set $\phi_s (\bm y)\triangleq \sum_{i=1}^{d_s} \mathcal{I}_{\infty}[y_{i}<0]$, in which $\mathcal{I}_{\infty}$ is an indicator function that assigns an infinite penalty to any $y_{i}<0$.  In this way, the corresponding proximal operator becomes $\bprox_{\phi_s}(\bv)=\text{ReLU}(\bv)=\text{max}(\textbf{0}, \bv)$, in which the maximum is applied to each dimension independently.

\vspace{-.2cm}
\subsection{The Overall Algorithm}
\vspace{-.2cm}
After performing $K$ iterations of  (\ref{eq:unfolding_precond}) with the proximal operator in  (\ref{eq:after_prox}), we obtain $\mbY^{(K)}(\mcW, \mcH)$, and analogous to (\ref{eq:meta_loss}), the meta-loss function for downstream tasks becomes
\begin{align}
    \ell_{\Theta}(\Theta, \mcW, \mcH) = \sum_{s\in\mcS'}\sum_{i=1}^{n_s '} \mathcal{D}\big( g_s[\bm y_{si}^{(K)} (\mcW, \mcH);\btheta_s], \bm t_{si} \big), \label{eq:loss_fn}
\end{align}
where $\mcS'$ denotes a set of labeled node types, $g_s: \mathbb{R}^{d_s} \rightarrow \mathbb{R}^{c_s}$ denotes a node-wise function parameterized by $\btheta_s$, $\Theta \triangleq \{\btheta_s\}_{s\in\mcS'}$ denotes a set of parameters of $g_s$, and $\bm t_{si} \in \mathbb{R}^{c_s}$ is a ground-truth node-wise target.\footnote{Note that w.l.o.g., we assume that only the first $n_{s}' < n_{s}$ type-$s$ nodes have labels for training.} 
Finally, we optimize  (\ref{eq:loss_fn}) over $\mcW$, $\mcH$, and $\Theta$ via typical iterative optimization method such as stochastic gradient descent (SGD).

We dub the model obtained by the above procedure as HALO for \textit{\textbf{H}eterogeneous \textbf{A}rchitecture \textbf{L}everaging \textbf{O}ptimization}, and Algorithm \ref{alg:Unfolding} summarizes the overall process.


\begin{algorithm}[H]
\caption{HALO algorithm 
}\label{alg}
\begin{algorithmic}
\REQUIRE $\mcG$: Heterogeneous graph dataset
\REQUIRE $\lambda$, $\alpha$: Hyperparamters, $K$ : Number of unfolding steps, $E$: Number of epochs
\STATE Randomly initialize $\mbH_t \in \mathcal{H}$, $\mbW_s \in \mcW$, and $\btheta_s \in \Theta$
\FOR{$e=1, \cdots, E$}
\STATE Set $\mbY_{s}^{(0)}=f(\mbX_s;\mbW_s)$ for all $s\in\mcS$
\FOR{$k=0, \cdots, K-1$}
\STATE Compute  (\ref{eq:before_prox}) using (\ref{eq:unfolding_precond}), and  (\ref{eq:after_prox}) to get $\mbY_s^{(k+1)}$ for each $s \in \mcS$ 
\ENDFOR
\STATE Compute $\ell_{\Theta}(\Theta, \mcW, \mcH)$ using $\mbY_s^{(K)}$
\STATE Optimize all $\mbW_s \in \mcW, \mbH_t \in \mcH, $ and $\btheta_s \in \Theta$ using SGD. 
\ENDFOR
\end{algorithmic}
\label{alg:Unfolding}
\end{algorithm}




\vspace{-.2cm}
\section{Experiments}
\vspace{-.2cm}
We evaluate HALO on node or entity classification tasks from the heterogeneous graph benchmark (HGB) \cite{(HGB)Lv2021} as well as several knowledge graph datasets \cite{(R-GCN)Schlichtkrull2018}, and compare with state-of-the-art baselines. Furthermore, we later evaluate w.r.t.~ZooBP under settings that permit direct comparisons, and we carry out additional empirical analyses and ablations.

\subsection{Node Classification on Benchmark Datasets}
\vspace{-.2cm}
The dataset descriptions are as follows:

\noindent\textbf{HGB} contains 4 node classification datasets, which is our focus herein. These include: DBLP, IMDB, ACM, and Freebase. DBLP and ACM are citation networks, IMDB is a movie information graph, and Freebase is a large knowledge graph. Except for Freebase, all datasets have node features (for Freebase, we set the node types as the node features).
For more details, please refer to \cite{(HGB)Lv2021}.

The \noindent\textbf{knowledge graph} benchmarking
proposed in \cite{(R-GCN)Schlichtkrull2018} is composed of 4 datasets:
AIFB, MUTAG, BGS, AM. The task is to classify the entities of target nodes. AIFB and MUTAG are relatively small-scale knowledge graphs, while BGS and AM are larger scale, \eg, the AM dataset has more than 1M entities. For more details, please refer to \cite{(R-GCN)Schlichtkrull2018}.

In every experiment, we chose $f(\mbX_s;\mbW_s)$ and $g_s(\bm y_s, \btheta_s)$ as linear functions for all $s \in \mcS$, and when the dimension $d_{s0}$ is different across node types, $f(\mbX_s;\mbW_s)$ is naturally constructed to align the dimensions of each feature. All models and experiments were implemented using PyTorch \cite{(Pytorch)2019} and the Deep Graph Library (DGL) \cite{(DGL)2019}. For the details on the hyperparameters and other experimental settings, please see the Supplementary Materials.

\begin{table*}[t]
\small
\centering
\caption{Results on HGB (left) and knowledge graphs (right). The results are averaged over 5 runs.}
\resizebox{1.0\linewidth}{!}{\begin{tabular}{|c|cccc|c|}
\Xhline{2\arrayrulewidth}
\Xhline{0.1pt}
Dataset      & DBLP  & IMDB  & ACM   & Freebase & Avg.  \\ \hline\hline
Metric     & \multicolumn{5}{c|}{Accuracy (\%)} \\ \hline\hline
R-GCN\cite{(R-GCN)Schlichtkrull2018}      & 92.07 & 62.05 & 91.41 & 58.33    & 75.97 \\
HAN\cite{(HAN)Wang2019}        & 92.05 & 64.63 & 90.79 & 54.77    & 75.56 \\
HGT\cite{(HGT)Hu2020}        & 93.49 & 67.20 & 91.00 & 60.51    & 78.05 \\
Simple-HGN\cite{(HGB)Lv2021} & 94.46 & 67.36 & 93.35 & \textbf{66.29}    & 80.37 \\ \hline
HALO       & \textbf{96.30} & \textbf{76.20} & \textbf{94.33} & 62.06    & \textbf{82.22} \\ 
\Xhline{2\arrayrulewidth}
\Xhline{0.1pt}
\end{tabular}
\hspace{.1in}
\begin{tabular}{|c|cccc|c|}
\Xhline{2\arrayrulewidth}
\Xhline{0.1pt}
Dataset   & AIFB  & MUTAG & BGS   & AM    & Avg.  \\ \hline\hline
Metric     & \multicolumn{5}{c|}{Accuracy (\%)} \\ \hline\hline
Feat\cite{(Feat)Paulheim2012}    & 55.55 & 77.94 & 72.41 & 66.66 & 68.14 \\
WL\cite{(WL1)Shervashidze2011,(WL2)DeVries2015}      & 80.55 & 80.88 & 86.20 & 87.37 & 83.75 \\
RDF2Vec\cite{(RDF2Vec)Ristoski2016} & 88.88 & 67.20 & 87.24 & 88.33 & 82.91 \\
R-GCN\cite{(R-GCN)Schlichtkrull2018}   & 95.83 & 73.33 & 83.10 & 89.29 & 85.39 \\ \hline
HALO    & \textbf{96.11} & \textbf{86.17} & \textbf{93.10} & \textbf{90.20} & \textbf{91.40} \\ 
\Xhline{2\arrayrulewidth}
\Xhline{0.1pt}
\end{tabular}}\label{table:Result}
\vspace{-.12in}
\end{table*}

Table \ref{table:Result} shows the results on node classification tasks, where the best performance is highlighted in bold. The left side of the table shows the results on HGB, including baselines R-GCN \cite{(R-GCN)Schlichtkrull2018}, HAN \cite{(HAN)Wang2019}, HGT \cite{(HGT)Hu2020}, and Simple-HGN \cite{(HGB)Lv2021}.
The right side shows the analogous results on the knowledge graph datasets, where we also include comparisons with  R-GCN \cite{(R-GCN)Schlichtkrull2018}, hand-designed feature extractors (Feat) \cite{(Feat)Paulheim2012}, Weisfeiler-Lehman kernels (WL), and RDF2Vec embeddings \cite{(RDF2Vec)Ristoski2016}.
The Avg.~column in each table represents the average of the results across the four datasets. In the table, HALO achieves the highest average accuracy, and outperforms the baselines on 7 of 8 datasets. 

\vspace{-.2cm}
\subsection{Comparison with ZooBP}
\vspace{-.2cm}
We next compare HALO with ZooBP \cite{(ZooBP)Eswaran2017}, which provided motivation for our use of compatibility matrices and represents a natural baseline to evaluate against when possible. ZooBP is a belief propagation method for heterogeneous graphs, and therefore, the number of classes per node type should be pre-defined for every node. Consequently, running ZooBP on HGB or the knowledge graph datasets in which only a single node type has labels (and therefore defined classes), \textit{e.g.}, DBLP,  is not directly feasible.
To that end, to compare with ZooBP, we modified the DBLP and Academic \cite{(Academic)Tang2008} datasets so that all node types have labels. Namely, in both graphs, we removed the ``Term'' and ``Venue'' node types and only utilized the ``Paper'' and ``Author'' node types. Moreover, since the original datasets only have the class labels for the ``Author'' type, we set the class labels for the ``Paper'' type as the corresponding ``Venue'' of the paper. 
Thus, the 20 and 18 Venues (specified in the Supplementary Materials) become the class labels for the ``Paper'' nodes in DBLP and Academic, respectively. Furthermore, we also considered a simpler setting in which the number of class labels for the ``Paper'' nodes (denoted as $k$) is four, 
by using the 4 categories (\textit{i.e., }\texttt{ML, DB, DM, IR}) of the Venues as the labels for the nodes. We note the number of class labels for the ``Author'' nodes was always set to 4 (corresponding to the Venue categories), and the resulting datasets are denoted as DBLP-reduced and Academic-reduced, respectively.\footnote{The graph in DBLP-reduced is a bipartite graph since the edges exist only between ``Paper'' and ``Author'' nodes, while the graph in Academic-reduced also has edges between the ``Paper'' nodes. }

For choosing the \textit{fixed} compatibility matrix required by ZooBP (as a label propagation-like method, ZooBP has no mechanism for learning), when $k=4$, we set $\mbH_t = \mbI_{k \times k} - \frac{1}{4} \mathbf{1}_{k \times k}$, where $\mathbf{1}_k$ is a $k \times k$ square matrix in which all elements are 1. When $k=18$ or $k=20$, we set the matrix considering the correlation between the ``Author'' and ``Paper'' labels. For example, for the ``\texttt{AAAI}'' Paper class, we set the matrix element value corresponding to the ``\texttt{ML}'' Author class higher than other Author classes, since ``\texttt{AAAI}'' venue and ``\texttt{ML}'' category are highly correlated. 
For further details about reducing the number of classes for the ``Paper'' nodes, setting the compatibility matrices, and the exact compatibility matrices that we used, please refer to the Supplementary Materials.
Moreover, for setting the output layer $g_s(\bm y_s, \btheta_s)$ in our case, we set $g_s(\bm y_s, \btheta_s)$ as an identity function, \ie, $\btheta_s = \mbI$, and we treat $\bm y_s$ as a score vector on each class.

Table \ref{table:ZooBP} shows the results of ZooBP and HALO on the DBLP-bipartite and Academic datasets as described above. We also report the itemized classification accuracy on ``Author'' and ``Paper'' node types. From these results, we observe that HALO outperforms ZooBP in all settings. For large $k$, the gap is generally more significant. The main reason for this phenomenon is that while HALO utilizes both node features and trainable compatibility matrices, ZooBP only uses training labels with fixed compatibility matrices. To the best of our knowledge, HALO is the only method designed to exploit a trainable compatibility matrix.

\begin{table*}[t]
\small
\centering
\caption{Comparison with ZooBP}
\resizebox{1.0\linewidth}{!}{\begin{tabular}{|c|cc|cc|}
\Xhline{2\arrayrulewidth}
\Xhline{0.1pt}
Dataset    & \multicolumn{2}{c|}{DBLP-reduced}                                & \multicolumn{2}{c|}{Academic-reduced}                                       \\ \hline\hline
Metric    & \multicolumn{4}{c|}{Accuracy (\%)}                      \\ \hline\hline
Node types & \multicolumn{2}{c|}{Author / Paper / All}                          & \multicolumn{2}{c|}{Author / Paper / All}                           \\ \hline\hline
\# Classes of ``Paper'' & \multicolumn{1}{c|}{$k=4$}                 & $k=20$                & \multicolumn{1}{c|}{$k=4$}                 & $k=18$                \\ \hline
ZooBP\cite{(ZooBP)Eswaran2017}      & \multicolumn{1}{c|}{63.00 / 63.60 / 63..47} & 62.00 / 19.53 / 28.47 & \multicolumn{1}{c|}{79.63 / 73.43 / 75.97} & 81.76 / 16.08 / 42.94 \\
Ours       & \multicolumn{1}{c|}{\textbf{76.50 / 64.13 / 66.79}} & \textbf{78.50 / 33.33 / 43.00} & \multicolumn{1}{c|}{\textbf{96.63 / 93.89 / 94.93}} & \textbf{97.81 / 84.73 / 90.18} \\ 
\Xhline{2\arrayrulewidth}
\Xhline{0.1pt}
\end{tabular}}\label{table:ZooBP}
\vspace{-.15in}
\end{table*}

\begin{table}[h]
    \vspace{-.1in}
    \begin{minipage}[b]{0.34\linewidth}
        \centering
        \resizebox{\linewidth}{!} {
        \begin{tabular}{|c|ccc|}
        \Xhline{2\arrayrulewidth}
        \Xhline{0.1pt}
                &                & Fixed         & HALO    \\
        Methods & HALO           & $\mbH_t=\mbI$ & w/o PGD \\ \hline\hline
        Metric  & \multicolumn{3}{c|}{Accuracy (\%)}      \\ \hline\hline
        DBLP    & \textbf{96.30} & 95.40         & 95.90   \\ 
        MUTAG   & \textbf{86.17} & 77.94         & 80.88   \\ 
        BGS     & \textbf{93.10} & 86.90         & 86.17   \\ 
        \Xhline{2\arrayrulewidth}
        \Xhline{0.1pt}
        \end{tabular}
        }
        \vspace{0.37in}
        \caption{\small Ablation study}
        \vspace{0.17in}
        \label{table:Ablation Study}
    \end{minipage}
    \hfill
    \begin{minipage}[b]{0.64\linewidth}
        \includegraphics[width=1.0\textwidth]{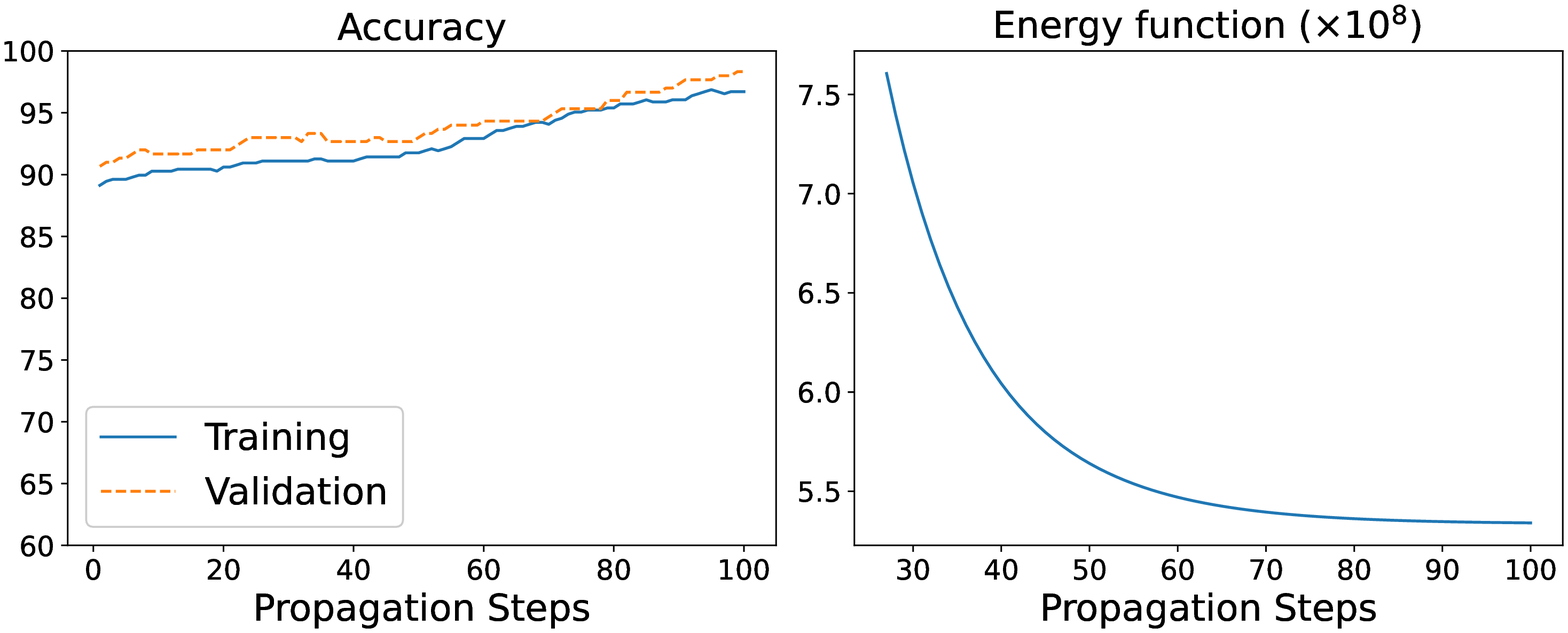}
        \vspace{-.15in}
        \captionof{figure}{ Accuracy and energy function value versus the number of propagation steps on ACM.}
        \label{fig:Propagation}
    \end{minipage}
\vspace{-.3in}
\end{table}

\subsection{Further HALO Analysis}
In this section, we first carry out an ablation study to assess the importance of learnable compatibility matrices $\mbH_t \in \mcH$, as well as the value of non-linear graph propagation operators instantiated through PGD.  Later, we also show the accuracy and energy values of HALO versus the number of propagation steps to demonstrate convergence of the lower-level HALO optimization process, which naturally mitigates oversmoothing.

\noindent\textbf{Ablation Study} \ \
To examine the effectiveness of trainable compatibility matrices, we enforce $\mbH_t=\mbI$ for all $t$, using the same embedding dimension across all node types.  Other training conditions remain unchanged.  Table \ref{table:Ablation Study} compares the resulting accuracy on DBLP, MUTAG, and BGS datasets, where we observe that HALO performs significantly better, especially on MUTAG and BGS.  These results show that using trainable compatibility matrices to quantify ambiguous relationships between neighboring nodes can be an effective strategy when performing node classification with heterogeneous graphs. Note that even when the embedding dimensions are same (as in these experiments), there can still be relation-specific mismatches between different node types that exhibit heterophily characteristics, which trainable compatibility matrices can help to resolve. 

Turning to the non-linear activations, we also re-run the HALO experiments with (\ref{eq:after_prox}) removed from the algorithm, meaning PGD defaults to regular SGD as no proximal step is used. As shown in Table \ref{table:Ablation Study}, HALO without PGD performs considerably worse, implying that the inclusion of the additional non-linear activation leads to better generalization.

\noindent\textbf{Results Varying Propagation Steps} \ \
Figure \ref{fig:Propagation} shows the results of HALO with various numbers of propagation steps. Specifically, the left plot shows the training and validation accuracy on the ACM dataset, and the right plot shows the energy function value (\ref{eq:energy_matrix}) during the unfolding steps. Notably, when we increase the number of propagation steps with a suitable $\alpha$, both training and validation accuracies are gradually increasing, and the energy function value steadily decreasing as expected. These results suggest that as the gradient descent steps gradually decrease the low-level energy function, the corresponding node embeddings $\mbY_s$ that emerge are effective in performing the downstream classification task. We have also repeated this experiment using the AIFB dataset and the overall trend is same; for more details please refer to the Supplementary Materials.

\vspace{-.2cm}
\section{Conclusion}
\vspace{-.2cm}
In this paper, we have proposed HALO, a novel HGNN architecture derived from the minimization steps of a relation-aware energy function, the latter consisting of trainable relation-dependent compatibility matrices that can resolve the mismatch between two different node types. Because of intrinsic properties of this unfolding framework, HALO is naturally robust to oversmoothing problems, and outperforms SOTA models on various benchmark datasets. For future work, extending HALO to spatio-temporal graphs in which the node and edge information change continuously can be considered.


\section*{Acknowledgments}

This work was supported in part by the New Faculty Startup Fund from Seoul National University, NRF grants [NRF-2021R1A2C2007884, NRF-2021M3E5D2A01024795] and IITP grants [No.2021-0-01343, No.2021-0-02068, No.2022-0-00959, No.2022-0-00113] funded by the Korean government, and SNU-NAVER Hyperscale AI Center.



\bibliographystyle{plain}
\bibliography{bibfile,wipf_refs}

\begin{thebibliography}{10}

\bibitem{Axelsson1996}
Owe Axelsson.
\newblock {\em Iterative solution methods}.
\newblock Cambridge university press, 1996.

\bibitem{cen2019representation}
Yukuo Cen, Xu~Zou, Jianwei Zhang, Hongxia Yang, Jingren Zhou, and Jie Tang.
\newblock Representation learning for attributed multiplex heterogeneous
  network.
\newblock In {\em Proceedings of the 25th ACM SIGKDD International Conference
  on Knowledge Discovery and Data mining}, 2019.

\bibitem{chen2021does}
Jiuhai Chen, Jonas Mueller, Vassilis~N Ioannidis, Soji Adeshina, Yangkun Wang,
  Tom Goldstein, and David Wipf.
\newblock Does your graph need a confidence boost? {C}onvergent boosted
  smoothing on graphs with tabular node features.
\newblock In {\em International Conference on Learning Representations}, 2021.

\bibitem{(WL2)DeVries2015}
Gerben Klaas~Dirk De~Vries and Steven De~Rooij.
\newblock Substructure counting graph kernels for machine learning from rdf
  data.
\newblock {\em Journal of Web Semantics}, 2015.

\bibitem{(metapath2vec)Dong2017}
Yuxiao Dong, Nitesh~V Chawla, and Ananthram Swami.
\newblock Metapath2vec: Scalable representation learning for heterogeneous
  networks.
\newblock In {\em Proceedings of the 23rd ACM SIGKDD International Conference
  on Knowledge Discovery and Data mining}, 2017.

\bibitem{(ZooBP)Eswaran2017}
Dhivya Eswaran, Stephan G\"{u}nnemann, Christos Faloutsos, Disha Makhija, and
  Mohit Kumar.
\newblock Zoo{BP}: {B}elief propagation for heterogeneous networks.
\newblock In {\em International Conference on Very Large Databases}, 2017.

\bibitem{(MAGNN)Fu2020}
Xinyu Fu, Jiani Zhang, Ziqiao Meng, and Irwin King.
\newblock {MAGNN}: Metapath aggregated graph neural network for heterogeneous
  graph embedding.
\newblock In {\em Proceedings of The Web Conference}, 2020.

\bibitem{(APPNP)Gasteiger2019}
Johannes Gasteiger, Aleksandar Bojchevski, and Stephan Günnemann.
\newblock Combining neural networks with personalized pagerank for
  classification on graphs.
\newblock In {\em International Conference on Learning Representations}, 2019.

\bibitem{(G-SAGE)Hamilton2017}
Will Hamilton, Zhitao Ying, and Jure Leskovec.
\newblock Inductive representation learning on large graphs.
\newblock {\em Advances in Neural Information Processing Systems}, 2017.

\bibitem{(RothColumn)Harold1981}
Harold~V. Henderson and S.~R. Searle.
\newblock The vec-permutation matrix, the vec operator and kronecker products:
  a review.
\newblock {\em Linear and Multilinear Algebra}, 1981.

\bibitem{(GPT-GNN)Mu2020}
Ziniu Hu, Yuxiao Dong, Kuansan Wang, Kai-Wei Chang, and Yizhou Sun.
\newblock {GPT-GNN}: Generative pre-training of graph neural networks.
\newblock In {\em Proceedings of the 26th ACM SIGKDD International Conference
  on Knowledge Discovery and Data mining}, 2020.

\bibitem{(HGT)Hu2020}
Ziniu Hu, Yuxiao Dong, Kuansan Wang, and Yizhou Sun.
\newblock Heterogeneous graph transformer.
\newblock In {\em Proceedings of The Web Conference}, 2020.

\bibitem{ioannidis2018}
Vassilis~N Ioannidis, Meng Ma, Athanasios~N Nikolakopoulos, Georgios~B
  Giannakis, and Daniel Romero.
\newblock Kernel-based inference of functions on graphs.
\newblock In {\em Adaptive Learning Methods for Nonlinear System Modeling}.
  2018.

\bibitem{(Adam)Kingma2015}
Diederick~P Kingma and Jimmy Ba.
\newblock Adam: A method for stochastic optimization.
\newblock In {\em International Conference on Learning Representations (ICLR)},
  2015.

\bibitem{(GCN)Kipf2017}
Thomas~N Kipf and Max Welling.
\newblock Semi-supervised classification with graph convolutional networks.
\newblock In {\em International Conference on Learning Representations}, 2017.

\bibitem{(Deepgcns)Guohao2019}
Guohao Li, Matthias Muller, Ali Thabet, and Bernard Ghanem.
\newblock Deep{GCN}s: Can {GCN}s go as deep as {CNN}s?
\newblock In {\em Proceedings of the IEEE/CVF International Conference on
  Computer Vision}, 2019.

\bibitem{Li2018}
Qimai Li, Zhichao Han, and Xiao-ming Wu.
\newblock Deeper insights into graph convolutional networks for semi-supervised
  learning.
\newblock {\em Proceedings of the AAAI Conference on Artificial Intelligence},
  32, Apr. 2018.

\bibitem{li2021leveraging}
Xiang Li, Danhao Ding, Ben Kao, Yizhou Sun, and Nikos Mamoulis.
\newblock Leveraging meta-path contexts for classification in heterogeneous
  information networks.
\newblock In {\em 2021 IEEE 37th International Conference on Data Engineering},
  2021.

\bibitem{(HGB)Lv2021}
Qingsong Lv, Ming Ding, Qiang Liu, Yuxiang Chen, Wenzheng Feng, Siming He,
  Chang Zhou, Jianguo Jiang, Yuxiao Dong, and Jie Tang.
\newblock Are we really making much progress? {R}evisiting, benchmarking and
  refining heterogeneous graph neural networks.
\newblock In {\em Proceedings of the 27th ACM SIGKDD International Conference
  on Knowledge Discovery and Data mining}, 2021.

\bibitem{ma2020unified}
Yao Ma, Xiaorui Liu, Tong Zhao, Yozen Liu, Jiliang Tang, and Neil Shah.
\newblock A unified view on graph neural networks as graph signal denoising.
\newblock {\em arXiv preprint arXiv:2010.01777}, 2020.

\bibitem{(IMDB)Maas2011}
Andrew~L. Maas, Raymond~E. Daly, Peter~T. Pham, Dan Huang, Andrew~Y. Ng, and
  Christopher Potts.
\newblock Learning word vectors for sentiment analysis.
\newblock In {\em Proceedings of the 49th Annual Meeting of the Association for
  Computational Linguistics: Human Language Technologies}, 2011.

\bibitem{NocedalWright2006}
Jorge Nocedal and Stephen~J. Wright.
\newblock Numerical optimization.
\newblock Springer, 2006.

\bibitem{(Lose)OonoSuzuki2020}
Kenta Oono and Taiji Suzuki.
\newblock Graph neural networks exponentially lose expressive power for node
  classification.
\newblock In {\em International Conference on Learning Representations}, 2020.

\bibitem{pan2021a}
Xuran Pan, Shiji Song, and Gao Huang.
\newblock A unified framework for convolution-based graph neural networks,
  2021.

\bibitem{(Proximal)Parikh2014}
Neal Parikh, Stephen Boyd, et~al.
\newblock Proximal algorithms.
\newblock {\em Foundations and Trends{\textregistered} in Optimization}, 2014.

\bibitem{(Pytorch)2019}
Adam Paszke, Sam Gross, Francisco Massa, Adam Lerer, James Bradbury, Gregory
  Chanan, Trevor Killeen, Zeming Lin, Natalia Gimelshein, Luca Antiga, Alban
  Desmaison, Andreas Kopf, Edward Yang, Zachary DeVito, Martin Raison, Alykhan
  Tejani, Sasank Chilamkurthy, Benoit Steiner, Lu~Fang, Junjie Bai, and Soumith
  Chintala.
\newblock Pytorch: An imperative style, high-performance deep learning library.
\newblock In {\em Advances in Neural Information Processing Systems}. 2019.

\bibitem{(Feat)Paulheim2012}
Heiko Paulheim and Johannes F{\"u}mkranz.
\newblock Unsupervised generation of data mining features from linked open
  data.
\newblock In {\em Proceedings of the 2nd International Conference on Web
  Intelligence, Mining and Semantics}, 2012.

\bibitem{pearl2022reverend}
Judea Pearl.
\newblock Reverend bayes on inference engines: A distributed hierarchical
  approach.
\newblock In {\em Probabilistic and Causal Inference: The Works of Judea
  Pearl}. 2022.

\bibitem{(RDF2Vec)Ristoski2016}
Petar Ristoski and Heiko Paulheim.
\newblock {RDF2Vec}: Rdf graph embeddings for data mining.
\newblock In {\em International Semantic Web Conference}, 2016.

\bibitem{(R-GCN)Schlichtkrull2018}
Michael Schlichtkrull, Thomas~N Kipf, Peter Bloem, Rianne van~den Berg, Ivan
  Titov, and Max Welling.
\newblock Modeling relational data with graph convolutional networks.
\newblock In {\em European Semantic Web Conference}, 2018.

\bibitem{(WL1)Shervashidze2011}
Nino Shervashidze, Pascal Schweitzer, Erik~Jan Van~Leeuwen, Kurt Mehlhorn, and
  Karsten~M Borgwardt.
\newblock Weisfeiler-{L}ehman graph kernels.
\newblock {\em Journal of Machine Learning Research}, 2011.

\bibitem{(Academic)Tang2008}
Jie Tang, Jing Zhang, Limin Yao, Juanzi Li, Li~Zhang, and Zhong Su.
\newblock Arnetminer: extraction and mining of academic social networks.
\newblock In {\em Proceedings of the 14th ACM SIGKDD International Conference
  on Knowledge Discovery and Data mining}, 2008.

\bibitem{(DGL)2019}
Minjie Wang, Da~Zheng, Zihao Ye, Quan Gan, Mufei Li, Xiang Song, Jinjing Zhou,
  Chao Ma, Lingfan Yu, Yu~Gai, Tianjun Xiao, Tong He, George Karypis, Jinyang
  Li, and Zheng Zhang.
\newblock Deep graph library: A graph-centric, highly-performant package for
  graph neural networks.
\newblock {\em arXiv preprint arXiv:1909.01315}, 2019.

\bibitem{(HAN)Wang2019}
Xiao Wang, Houye Ji, Chuan Shi, Bai Wang, Yanfang Ye, Peng Cui, and Philip~S
  Yu.
\newblock Heterogeneous graph attention network.
\newblock In {\em The World Wide Web Conference}, 2019.

\bibitem{(CAMLP)Yuto2016}
Yuto Yamaguchi, Christos Faloutsos, and Hiroyuki Kitagawa.
\newblock {CAMLP}: Confidence-aware modulated label propagation.
\newblock In {\em Proceedings of the 2016 SIAM International Conference on Data
  Mining}, 2016.

\bibitem{(TWIRLS)Yang2021}
Yongyi Yang, Tang Liu, Yangkun Wang, Jinjing Zhou, Quan Gan, Zhewei Wei, Zheng
  Zhang, Zengfeng Huang, and David Wipf.
\newblock Graph neural networks inspired by classical iterative algorithms.
\newblock In {\em International Conference on Machine Learning}, 2021.

\bibitem{yun2019graph}
Seongjun Yun, Minbyul Jeong, Raehyun Kim, Jaewoo Kang, and Hyunwoo~J Kim.
\newblock Graph transformer networks.
\newblock {\em Advances in Neural Information Processing Systems}, 2019.

\bibitem{thatpaper}
Hongwei Zhang, Tijin Yan, Zenjun Xie, Yuanqing Xia, and Yuan Zhang.
\newblock Revisiting graph convolutional network on semi-supervised node
  classification from an optimization perspective.
\newblock {\em CoRR}, 2020.

\bibitem{Zhou2003}
Dengyong Zhou, Olivier Bousquet, Thomas Lal, Jason Weston, and Bernhard
  Sch\"{o}lkopf.
\newblock Learning with local and global consistency.
\newblock In {\em Advances in Neural Information Processing Systems}, 2003.

\bibitem{zhu2021interpreting}
Meiqi Zhu, Xiao Wang, Chuan Shi, Houye Ji, and Peng Cui.
\newblock Interpreting and unifying graph neural networks with an optimization
  framework.
\newblock {\em arXiv preprint arXiv:2101.11859}, 2021.

\end{thebibliography}
\newpage

\section*{Checklist}

\begin{enumerate}

\item For all authors...
\begin{enumerate}
  \item Do the main claims made in the abstract and introduction accurately reflect the paper's contributions and scope?
    \answerYes{}
  \item Did you describe the limitations of your work?
    \answerYes{Please see Supplementary Materials.} 
  \item Did you discuss any potential negative societal impacts of your work?
    \answerYes{Please see Supplementary Materials.} 
  \item Have you read the ethics review guidelines and ensured that your paper conforms to them?
    \answerYes{}
\end{enumerate}

\item If you are including theoretical results...
\begin{enumerate}
  \item Did you state the full set of assumptions of all theoretical results?
    \answerYes{Please see Supplementary Materials.} 
        \item Did you include complete proofs of all theoretical results?
    \answerYes{Please see Supplementary Materials.} 
\end{enumerate}

\item If you ran experiments...
\begin{enumerate}
  \item Did you include the code, data, and instructions needed to reproduce the main experimental results (either in the supplemental material or as a URL)?
    \answerNo{Codes can be submitted after acceptance.} 
  \item Did you specify all the training details (e.g., data splits, hyperparameters, how they were chosen)?
    \answerYes{Please see Supplementary Materials.} 
        \item Did you report error bars (e.g., with respect to the random seed after running experiments multiple times)?
    \answerYes{Please see Supplementary Materials.} 
        \item Did you include the total amount of compute and the type of resources used (e.g., type of GPUs, internal cluster, or cloud provider)?
    \answerYes{Please see Supplementary Materials.} 
\end{enumerate}

\item If you are using existing assets (e.g., code, data, models) or curating/releasing new assets...
\begin{enumerate}
  \item If your work uses existing assets, did you cite the creators?
    \answerYes{}
  \item Did you mention the license of the assets?
    \answerNA{}
  \item Did you include any new assets either in the supplemental material or as a URL?
    \answerNo{}
  \item Did you discuss whether and how consent was obtained from people whose data you're using/curating?
    \answerNA{}
  \item Did you discuss whether the data you are using/curating contains personally identifiable information or offensive content?
    \answerNA{}
\end{enumerate}

\item If you used crowdsourcing or conducted research with human subjects...
\begin{enumerate}
  \item Did you include the full text of instructions given to participants and screenshots, if applicable?
    \answerNA{}
  \item Did you describe any potential participant risks, with links to Institutional Review Board (IRB) approvals, if applicable?
    \answerNA{}
  \item Did you include the estimated hourly wage paid to participants and the total amount spent on participant compensation?
    \answerNA{}
\end{enumerate}

\end{enumerate}



\newpage
\section{Proof of Lemma \ref{lemma:closed-form}}

Before proceeding the proof of Lemma \ref{lemma:closed-form}, we first provide a basic mathematical result.

\begin{lemma}\label{lemma:RothColumn-supp} (Roth's Column Lemma \cite{(RothColumn)Harold1981}). For any three matrices $\mbX, \mbY$ and $\mbZ$,
\begin{align}
    vec(\mbX\mbY\mbZ) = (\mbZ^{\top}\otimes\mbX) vec(\mbY) \label{eq:RothColumn-supp}
\end{align}
\end{lemma}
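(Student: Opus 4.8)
The plan is to exploit the fact that both sides of (\ref{eq:RothColumn-supp}) are linear in the matrix $\mbY$, so that it suffices to verify the identity on the standard basis matrices $\bm{e}_a\bm{e}_b^{\top}$ (each the matrix with a single $1$ in position $(a,b)$ and zeros elsewhere); the general claim then follows by writing $\mbY=\sum_{a,b}y_{ab}\,\bm{e}_a\bm{e}_b^{\top}$ and summing the verified identities against the scalars $y_{ab}$. Throughout I read $vec$ as the column-stacking operator, \ie, $vec(\mbB)$ concatenates the columns of $\mbB$ into one long column vector, which is the convention under which the stated ordering $\mbZ^{\top}\otimes\mbX$ (rather than $\mbX\otimes\mbZ^{\top}$) is the correct one. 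The two elementary facts I would invoke are the outer-product rule $vec(\bm{u}\bm{v}^{\top})=\bm{v}\otimes\bm{u}$ for column vectors $\bm{u},\bm{v}$ (immediate from the definition, since the $k$-th column of $\bm{u}\bm{v}^{\top}$ equals $v_k\bm{u}$), and the mixed-product property $(\mbA\otimes\mbB)(\mbC\otimes\mbD)=(\mbA\mbC)\otimes(\mbB\mbD)$.

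First I would evaluate the left-hand side at $\mbY=\bm{e}_a\bm{e}_b^{\top}$. Since $\mbX\bm{e}_a$ is the $a$-th column of $\mbX$ and $\bm{e}_b^{\top}\mbZ$ is the $b$-th row of $\mbZ$, the triple product collapses to a rank-one outer product, $\mbX\bm{e}_a\bm{e}_b^{\top}\mbZ=(\mbX\bm{e}_a)(\bm{e}_b^{\top}\mbZ)=(\mbX\bm{e}_a)\big(\mbZ^{\top}\bm{e}_b\big)^{\top}$. Applying the outer-product rule with $\bm{u}=\mbX\bm{e}_a$ and $\bm{v}=\mbZ^{\top}\bm{e}_b$ then yields $vec\big(\mbX\bm{e}_a\bm{e}_b^{\top}\mbZ\big)=\big(\mbZ^{\top}\bm{e}_b\big)\otimes\big(\mbX\bm{e}_a\big)$.

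Next I would evaluate the right-hand side on the same basis matrix. The column-stacking convention gives $vec\big(\bm{e}_a\bm{e}_b^{\top}\big)=\bm{e}_b\otimes\bm{e}_a$, and the mixed-product property then produces $(\mbZ^{\top}\otimes\mbX)(\bm{e}_b\otimes\bm{e}_a)=(\mbZ^{\top}\bm{e}_b)\otimes(\mbX\bm{e}_a)$, which coincides exactly with the left-hand side computed above. Summing over $a,b$ against $y_{ab}$ establishes (\ref{eq:RothColumn-supp}) for arbitrary $\mbY$. Since every step is an elementary rewrite, there is no genuine analytical obstacle; the only point requiring care is fixing the vectorization convention at the outset, because an inconsistent choice would transpose the Kronecker factors into $\mbX\otimes\mbZ^{\top}$ and is the single way the index bookkeeping can go wrong.
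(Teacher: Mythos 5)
Your proof is correct, but it cannot be compared against a proof in the paper because the paper does not prove this statement at all: Roth's column lemma is quoted as a known result with a citation to the literature and then used as a black box in deriving the closed-form solution of Lemma 1.2. Your argument supplies the standard elementary justification, and every step checks out: by linearity in $\mbY$ it suffices to verify the identity on the basis matrices $\bm{e}_a\bm{e}_b^{\top}$; the left-hand side collapses to the rank-one matrix $(\mbX\bm{e}_a)(\mbZ^{\top}\bm{e}_b)^{\top}$, whose vectorization is $(\mbZ^{\top}\bm{e}_b)\otimes(\mbX\bm{e}_a)$ by the rule $vec(\bm{u}\bm{v}^{\top})=\bm{v}\otimes\bm{u}$; and the right-hand side gives the same vector via $vec(\bm{e}_a\bm{e}_b^{\top})=\bm{e}_b\otimes\bm{e}_a$ and the mixed-product property $(\mbA\otimes\mbB)(\mbC\otimes\mbD)=(\mbA\mbC)\otimes(\mbB\mbD)$. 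You are also right to flag the vectorization convention as the one genuine pitfall: the ordering $\mbZ^{\top}\otimes\mbX$ is tied to column-stacking, and this matters for the paper, since the same convention is what makes the matrices $\mbP$, $\mbQ$, $\mbD$ in Lemma 1.2 come out with their stated Kronecker ordering. In short, your proposal turns a cited fact into a self-contained one, at the cost of a short argument the authors chose to outsource to the reference.
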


We now proceed with the proof of our result.

\begin{proof}
The gradient of (\ref{eq:energy_matrix}) is as follows:
\begin{align}
    \nabla_{\mbY_s}\ell_{\mbY}(\mbY) = (\mbI +\lambda\mbD_s)\mbY_s-f(\mbX_s;\mbW_s) + \lambda \sum_{s'\in\mcS}\sum_{t\in\mcT_{ss'}} \big(\mbD_{st}\mbY_s(\mbH_t\mbH_t^{\top})-\mbA_t\mbY_{s'}(\mbH_t^{\top}+\mbH_{t_{\text{inv}}})\big).\label{eq:gradient-supp}
\end{align}

By applying a $vec(\cdot)$ operation to both sides of (\ref{eq:gradient-supp}) we obtain:

\begin{align}
    vec(\nabla_{\mbY_s}\ell(\mbY)) &= vec(\mbY_s)-vec(f(\mbX_s;\mbW_s)) \nonumber\\
    &+ \lambda (\sum_{s'\in\mcS}\sum_{t\in\mcT_{ss'}} (vec(\mbD_{st}\mbY_s(\mbH_t\mbH_t^{\top}))-vec(\mbA_t\mbY_{s'}(\mbH_t^{\top}+\mbH_{t_{\text{inv}}})))+vec(\mbD_s\mbY_s)) \label{eq:gradient_vec-supp}
\end{align}

Here, using Roth's Column Lemma \ref{lemma:RothColumn-supp}, we define the matrices $\mbP, \mbQ, \mbD$ as follows:

\begin{align}
    \mbP = \left[ \begin{array}{ccc}
        \mbP_{11} & ... & \mbP_{1|\mcS|}  \\
        ... & ... & ...  \\
        \mbP_{|\mcS|1} & ... & \mbP_{|\mcS||\mcS|} 
    \end{array} \right] ; \mbP_{ss'} = \sum_{t\in\mcT_{ss'}} ((\mbH_t^{\top}+\mbH_{t_{\text{inv}}}) \otimes \mbA_t) \nonumber
\end{align}

\begin{align}
    \mbQ = \bigoplus_{s\in\mcS} \mbQ_s; \mbQ_s=\sum_{s'\in\mcS}\sum_{t\in\mcT_{ss'}} (\mbH_t\mbH_t^{\top}\otimes\mbD_{st}), \mbD = \bigoplus_{s\in\mcS}\mbI\otimes\mbD_s. \nonumber
\end{align}
Therefore, after rewriting (\ref{eq:gradient_vec-supp}) as

\begin{align}
    vec(\nabla_{\mbY_s}\ell_{\mbY}(\mbY)) &= vec(\mbY_s) - vec(f(\mbX_s;\mbW_s)) \nonumber\\
    &+ \lambda\big(\mbQ_s vec(\mbY_s) - \sum_{s'\in\mcS}\mbP_{ss'}vec(\mbY_{s'})+\mbD_s vec(\mbY_s)\big), \label{eq:gradient_PQD_s-supp}
\end{align}
we can stack $|\mcS|$ such matrix equations together using $\mbP, \mbQ, \mbD$ to obtain:

\begin{align}
    vec(\nabla_{\mbY}\ell_{\mbY}(\mbY)) &= vec(\mbY) - vec(\widetilde{f}(\widetilde{\mbX};\mcW)) + \lambda\big(\mbQ -\mbP+\mbD \big) vec(\mbY). \label{eq:gradient_stack-supp}
\end{align}

Since $\ell_{\mbY}(\mbY)$ is a convex function, a point that achieves $\nabla_{\mbY} \ell_{\mbY}(\mbY)=0$ is a optimal point. Therefore, the closed-form solution for $vec(\mbY^* (\mcW, \mcH))$ is derived as:

\begin{align}
    vec(\mbY^* (\mcW, \mcH)) = (\mbI + \lambda(\mbQ-\mbP+\mbD))^{-1}vec(\widetilde{f}(\widetilde{\mbX};\mcW)). \label{eq:closed_form-supp}
\end{align}
\end{proof}

\section{Proof of Theorem (3.2, Manuscript)}
\begin{theorem} \label{theorem:convergence}
The iterations (\ref{eq:unfolding_precond}) are guaranteed to monotonically converge to the unique global minimum of (\ref{eq:energy}) provided that
\begin{align}
    \alpha < \frac{2 + 2\lambda d_{\text{min}}}{1 + \lambda (d_{\text{min}} + \sigma_{\text{max}})}, \label{eq:convergence}
\end{align}
where $d_{\text{min}}$ is the minimum diagonal element of $\mbD$ and $\sigma_{\text{max}}$ is a maximum eigenvalue of matrix $(\mbQ - \mbP)$.
\end{theorem}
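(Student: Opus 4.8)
The plan is to recast the preconditioned iteration \eqref{eq:unfolding_precond} as an affine fixed-point recursion on the stacked vector $vec(\mbY)$ and then control the spectral radius of its iteration matrix. First I would apply $vec(\cdot)$ to \eqref{eq:unfolding_precond} exactly as in the proof of Lemma \ref{lemma:closed-form}, using Roth's Column Lemma (Lemma \ref{lemma:RothColumn-supp}) to collect the relation-type terms into $\mbP$, $\mbQ$, $\mbD$. Writing $\mbM \triangleq \mbI + \lambda(\mbQ - \mbP + \mbD)$ for the (constant) Hessian read off from \eqref{eq:gradient_stack-supp} and $\widetilde{\mbD} \triangleq \mbI + \lambda\mbD = \bigoplus_{s}\mbI\otimes(\mbI+\lambda\mbD_s)$ for the block-diagonal Jacobi preconditioner, the update collapses to
\begin{align}
vec(\mbY^{(k+1)}) = \mbR\, vec(\mbY^{(k)}) + \alpha\widetilde{\mbD}^{-1}vec(\widetilde{f}(\widetilde{\mbX};\mcW)),\qquad \mbR \triangleq \mbI - \alpha\widetilde{\mbD}^{-1}\mbM. \nonumber
\end{align}
Its unique fixed point is precisely the minimizer $vec(\mbY^*)$ of \eqref{eq:closed_form-supp}, so it remains to show that $\mbR$ is a contraction under \eqref{eq:convergence}.

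Next I would record two structural facts. Since $\ell_{\mbY}$ is a sum of squared (Frobenius/Euclidean) norms, it is convex with constant Hessian $\mbM$, and the data-fidelity term alone contributes $\mbI$, giving $\mbM \succeq \mbI \succ 0$; in particular $\mbQ - \mbP + \mbD \succeq 0$. The preconditioner $\widetilde{\mbD}$ is diagonal and positive definite with every entry at least $1 + \lambda d_{\min}$. The operator $\widetilde{\mbD}^{-1}\mbM$ is generally nonsymmetric, but it is similar to the symmetric positive definite matrix $\widetilde{\mbD}^{-1/2}\mbM\widetilde{\mbD}^{-1/2}$ (itself a congruence transform of $\mbM$); hence all its eigenvalues $\mu$ are real and positive. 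The eigenvalues of $\mbR$ are then $1 - \alpha\mu$, so $\rho(\mbR) < 1$ iff $0 < \alpha < 2/\mu_{\max}$, where $\mu_{\max} = \lambda_{\max}(\widetilde{\mbD}^{-1}\mbM)$. Working in the $\widetilde{\mbD}$-weighted norm, equivalently tracking $\widetilde{\mbD}^{1/2}(vec(\mbY^{(k)}) - vec(\mbY^*))$, which evolves under the symmetric contraction $\mbI - \alpha\widetilde{\mbD}^{-1/2}\mbM\widetilde{\mbD}^{-1/2}$, then yields the claimed monotone decrease of the error at rate $\rho(\mbR)$.

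Finally I would convert $2/\mu_{\max}$ into the explicit bound \eqref{eq:convergence}. Since $\widetilde{\mbD}^{-1/2}\mbM\widetilde{\mbD}^{-1/2} = \mbI + \lambda\widetilde{\mbD}^{-1/2}(\mbQ - \mbP)\widetilde{\mbD}^{-1/2}$, a generalized Rayleigh-quotient argument gives
\begin{align}
\mu_{\max} = 1 + \lambda\max_{\bz\neq\bzero}\frac{\bz^\top(\mbQ-\mbP)\bz}{\bz^\top\widetilde{\mbD}\bz} \leq 1 + \lambda\frac{\sigma_{\max}}{1+\lambda d_{\min}} = \frac{1 + \lambda(d_{\min}+\sigma_{\max})}{1+\lambda d_{\min}}, \nonumber
\end{align}
using $\bz^\top(\mbQ-\mbP)\bz \le \sigma_{\max}\|\bz\|_2^2$ together with $\bz^\top\widetilde{\mbD}\bz \ge (1+\lambda d_{\min})\|\bz\|_2^2$. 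Inverting this bound shows $2/\mu_{\max} \ge (2+2\lambda d_{\min})/(1+\lambda(d_{\min}+\sigma_{\max}))$, so any $\alpha$ below the right-hand side forces $\rho(\mbR)<1$ and hence monotone convergence to $\mbY^*$.

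I expect the main obstacle to be the nonsymmetry of the preconditioned operator $\widetilde{\mbD}^{-1}\mbM$: one must pass through the symmetric congruent matrix $\widetilde{\mbD}^{-1/2}\mbM\widetilde{\mbD}^{-1/2}$ to legitimately speak of real eigenvalues, of a well-defined contraction factor, and of a monotonically decreasing error. A secondary subtlety is that the Rayleigh-quotient bound is stated in terms of $\sigma_{\max}$, the largest eigenvalue of $\mbQ-\mbP$ rather than of $\mbQ-\mbP+\mbD$, matching the quantity in \eqref{eq:convergence}; care is needed to verify that pairing the numerator upper bound with the denominator lower bound is valid, in particular that the maximizing direction yields a nonnegative quadratic form so that the two estimates combine in the correct direction.
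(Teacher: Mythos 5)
Your proof is correct, and it reaches the stated bound by a recognizably different route from the paper's. The paper works directly on the objective: it writes the exact second-order expansion of the quadratic energy, substitutes the preconditioned step, and extracts the matrix condition $\alpha\widetilde{\mbD} - \tfrac{\alpha^2}{2}\left(\mbI + \lambda(\mbQ-\mbP+\mbD)\right) \succ 0$, which it then loosens by bounding $\mbI+\lambda\mbD \succeq (1+\lambda d_{\min})\mbI$ and $\mbQ-\mbP \preceq \sigma_{\max}\mbI$; "monotone convergence" there means monotone decrease of the energy, with strong convexity supplying convergence to the unique minimizer. You instead analyze the affine fixed-point map, symmetrize $\widetilde{\mbD}^{-1}\mbM$ by congruence, and demand $\rho(\mbR)<1$; a short computation shows your condition $\alpha < 2/\mu_{\max}$ is exactly equivalent to the paper's positive-definiteness condition (conjugate both sides by $\widetilde{\mbD}^{-1/2}$), and your Rayleigh-quotient relaxation pairs the same two bounds the paper pairs, so the final thresholds coincide. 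What your route buys is an explicit geometric rate $\rho(\mbR)$ and monotone decay of the iterate error in the $\widetilde{\mbD}$-norm; what the paper's buys is a literal proof that the energy values themselves are nonincreasing, which is the sense of monotonicity the theorem most plausibly intends. On the subtlety you flag: the pairing of the numerator upper bound with the denominator lower bound is safe because $\sigma_{\max}\ge 0$ (indeed $\mbQ\succeq 0$ as a sum of Kronecker products of PSD factors, and $\mathrm{tr}(\mbP)=0$ absent self-loops, so $\mathrm{tr}(\mbQ-\mbP)\ge 0$ forces a nonnegative top eigenvalue); the paper makes the identical pairing without comment. If you want your write-up to also certify monotone energy descent rather than only error contraction, add one line translating $\rho(\mbR)<1$ back into $\alpha\widetilde{\mbD}\succ\tfrac{\alpha^2}{2}\mbM$ and citing the exact quadratic expansion.
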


\begin{proof}
The energy function and iteration $k+1$ of gradient descent on $\mbY_s$ with preconditioning is as follows:

\begin{align}
    \ell_{\mbY}(\mbY) \triangleq & \sum_{s\in\mcS}\left[\frac{1}{2}||\mbY_s-f(\mbX_s;\mbW_s)||_{\mathcal{F}}^2 + \frac{\lambda}{2}\sum_{s'\in\mcS}\sum_{t\in\mcT_{ss'}}\sum_{(i,j)\in\mcE_t}||\bm y_{si}\mbH_t-\bm y_{s'j}||_2^2 \right] \label{eq:energy-supp}
\end{align}

\begin{align}
    \mbY_s^{(k+1)} = \mbY_s^{(k)} - \alpha \widetilde{\mbD}_s^{-1} \nabla_{\mbY_s^{(k)}}\ell_{\mbY}(\mbY). \label{eq:unfolding_precond-supp}
\end{align}
By applying the $vec(\cdot)$ operation to both sides of \ref{eq:unfolding_precond-supp}, this can be transformed into:
\begin{align}
    vec(\mbY_s^{(k+1)}) &= vec(\mbY_s^{(k)}) - \alpha vec( \widetilde{\mbD}_s^{-1} \nabla_{\mbY_s^{(k)}}\ell_{\mbY}(\mbY)) \label{eq:vec_unfolding_precond-supp} \\
    &= vec(\mbY_s^{(k)}) - \alpha (\mbI \otimes \widetilde{\mbD}_s^{-1}) vec(\nabla_{\mbY_s^{(k)}}\ell_{\mbY}(\mbY)). \label{eq:vec_unfolding_precond_roth-supp}
\end{align}

Note that we apply Roth's column lemma to (\ref{eq:vec_unfolding_precond-supp}) to derive (\ref{eq:vec_unfolding_precond_roth-supp}). Stacking $|\mcS|$ such vectors together, (\ref{eq:vec_unfolding_precond_roth-supp}) can be written as:
\begin{align}
    vec(\mbY^{(k+1)}) = vec(\mbY^{(k)}) - \alpha \widetilde{\mbD}^{-1} vec(\nabla_{\mbY^{(k)}}\ell_{\mbY}(\mbY)), \label{eq:stack_vec_unfolding_precond_roth-supp}
\end{align}
where $\widetilde{\mbD}^{-1} \triangleq \bigoplus_{s \in \mcS} \mbI \otimes \widetilde{\mbD}_s^{-1}$.

Because $\ell_{\mbY} (\mbY)$ is convex, for any $\mbY^{(k+1)}$ and $\mbY^{(k)}$ the following inequality holds:
\begin{align}
    \ell_{\mbY} (\mbY^{(k+1)}) \leq \ell_{\mbY} (\mbY^{(k)}) &+ vec(\nabla_{\mbY^{(k)}}\ell_{\mbY}(\mbY))^{\top} vec(\mbY^{(k+1)} - \mbY^{(k)}) \nonumber\\
    &+ \frac{1}{2} vec(\mbY^{(k+1)} - \mbY^{(k)})^{\top} \nabla_{\mbY^{(k)}}^2 \ell_{\mbY}(\mbY) vec(\mbY^{(k+1)} - \mbY^{(k)}), \label{ineq:second_order}
\end{align}
where $\nabla^2_{\mbY^{(k)}} \ell_{\mbY}(\mbY)$ is a Hessian matrix whose elements are $\nabla^2_{\mbY^{(k)}} \ell_{\mbY}(\mbY)_{ij}=\frac{\partial \ell _{\mbY}(\mbY)}{\partial vec(\mbY)_i \partial vec(\mbY)_j} |_{\mbY=\mbY^{(k)}}$.

Plugging in the gradient descent update by letting $vec(\mbY^{(k+1)} - \mbY^{(k)}) = -\alpha \widetilde{\mbD}^{-1} vec(\nabla_{\mbY^{(k)}}\ell_{\mbY}(\mbY))$, we get:

\begin{align}
    \ell_{\mbY} (\mbY^{(k+1)}) \leq \ell_{\mbY} (\mbY^{(k)}) &- (\widetilde{\mbD}^{-1} vec(\nabla_{\mbY^{(k)}}\ell_{\mbY}(\mbY)))^{\top} (\alpha \widetilde{\mbD}) (\widetilde{\mbD}^{-1} vec(\nabla_{\mbY^{(k)}}\ell_{\mbY}(\mbY))) \nonumber\\
    &+(\widetilde{\mbD}^{-1} vec(\nabla_{\mbY^{(k)}}\ell_{\mbY}(\mbY)))^{\top} (\frac{\alpha^2}{2}\nabla_{\mbY^{(k)}}^2 \ell_{\mbY}(\mbY))  (\widetilde{\mbD}^{-1} vec(\nabla_{\mbY^{(k)}}\ell_{\mbY}(\mbY))).
\end{align}

If $\alpha \widetilde{\mbD} - \frac{\alpha^2}{2}\nabla^2_{\mbY^{(k)}} \ell_{\mbY}(\mbY) \succ 0$ holds, then gradient descent will never increase the loss, and moreover, since $\ell_{\mbY} (\mbY)$ is strongly convex, it will monotonically decrease the loss until the unique global minimum is obtained.  To compute $\nabla^2_{\mbY^{(k)}}\ell_{\mbY}(\mbY)$, we differentiate (\ref{eq:gradient_stack-supp}) and arrive at:
\begin{align}
    \nabla^2_{\mbY^{(k)}}\ell_{\mbY}(\mbY) = \mbI + \lambda (\mbQ - \mbP + \mbD).
\end{align}

Returning to the above inequality, we can then proceed as follows:
\begin{align}
    \alpha \widetilde{\mbD} - \frac{\alpha^2}{2}(\mbI + \lambda(\mbQ - \mbP + \mbD)) &= \alpha (\mbI+\lambda \mbD) - \frac{\alpha^2}{2}(\mbI + \lambda(\mbQ - \mbP + \mbD)) \nonumber\\
    &= (\alpha - \frac{\alpha^2}{2})(\mbI + \lambda\mbD) - \frac{\alpha^2 \lambda}{2} (\mbQ - \mbP) \nonumber\\
    &\succ (\alpha - \frac{\alpha^2}{2})(1 + \lambda d_{\text{min}}) \mbI - \frac{\alpha^2 \lambda}{2} (\mbQ - \mbP).
\end{align}
If $\alpha$ satisfies $(\alpha - \frac{\alpha^2}{2})(1 + \lambda d_{\text{min}}) \mbI - \frac{\alpha^2 \lambda}{2} (\mbQ - \mbP) \succ 0$, then $\alpha \widetilde{\mbD} - \frac{\alpha^2}{2}\nabla^2_{\mbY^{(k)}} \ell_{\mbY}(\mbY) \succ 0$ holds. Therefore, a sufficient condition for convergence to the unique global optimum is:
\begin{align}
    (\alpha - \frac{\alpha^2}{2})(1 + \lambda d_{\text{min}}) - \frac{\alpha^2 \lambda}{2} \sigma_{\text{max}} > 0.
\end{align}
Consequently, to guarantee the aforementioned convergence we arrive at the final inequality:
\begin{align}
    \alpha < \frac{2 + 2\lambda d_{\text{min}}}{1 + \lambda(d_{\text{min}}+\sigma_{\text{min}})}.
\end{align}
\end{proof}

\section{Additional Experiment results}

\begin{figure}[h]
    \centering
    \includegraphics[width=0.9\textwidth]{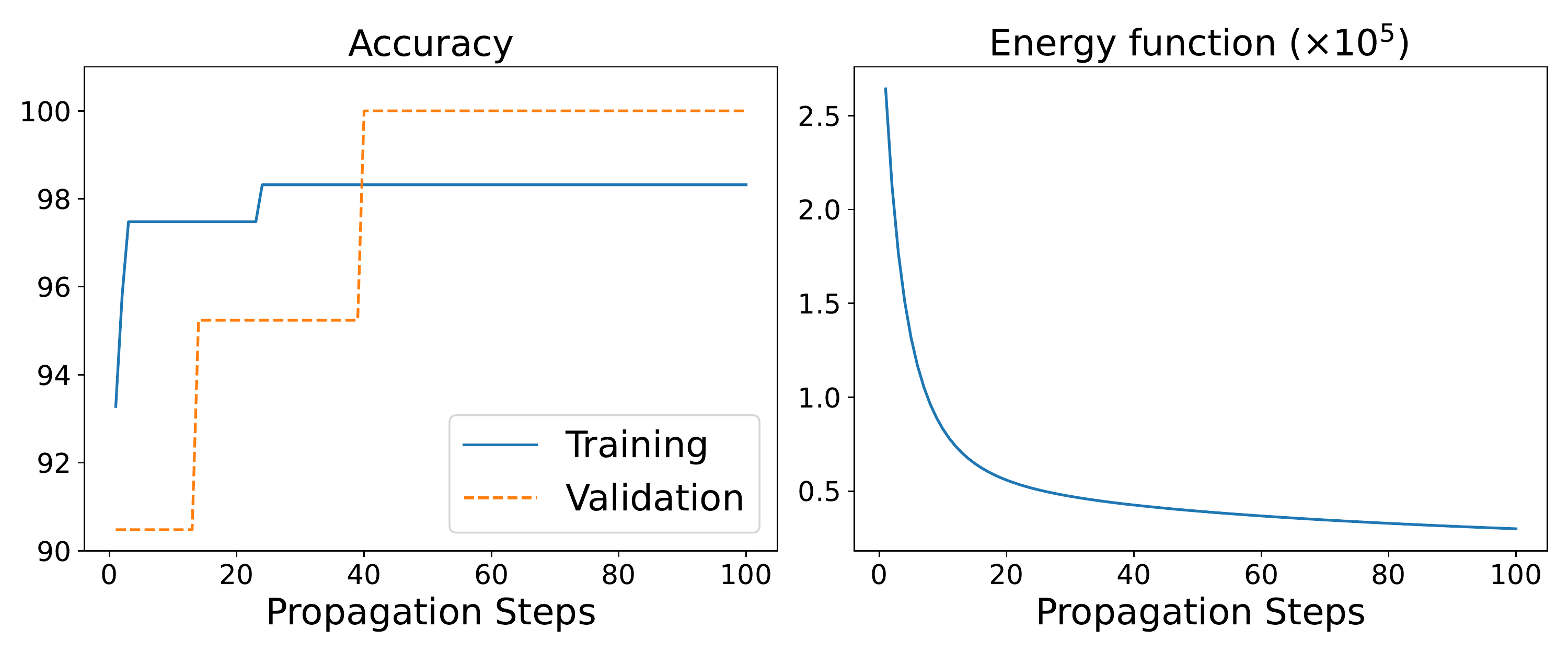}
    \vspace{-.1in}
    \caption{Accuracy and energy function value versus the number of propagation steps on AIFB.
    }\label{fig:AIFB_Prop_step}
        \vspace{-.1in}
\end{figure}
\subsection{Results Varying Propagation Steps}
Figure \ref{fig:AIFB_Prop_step} shows the results of HALO with various number of propagation steps in AIFB dataset experiment. Similar to the results on the ACM dataset in our main paper, the training and validation accuracy increase when we increase the number of propagation steps, while the energy function value is steadily decreasing.

\subsection{Results on Different Base Models and Test Time Comparison}

Table \ref{table:Additional_results} shows the results of applying HALO with different base models, as well as test time comparison between HALO and R-GCN\cite{(R-GCN)Schlichtkrull2018}. For the former, we use a 2-layer MLP $f(\mbX;\mbW_s)$ per node type. In Table \ref{table:Additional_results} (left), HALO with the MLP base model achieves comparable results to HALO using a linear layer for the base model. Of course with larger or more complex datasets, an MLP could potentially still be advantageous.

As we mentioned in the main paper, the time complexity of propagating the graph features in HALO is on the same level as R-GCN. To corroborate this claim empirically, we carry out an experiment comparing the test times of HALO and R-GCN.  In this experiment, both R-GCN and HALO used 16 GNN layers with 16 hidden dimensions. In Table \ref{table:Additional_results} (right), as expected, R-GCN and HALO take almost same amount of time when the model sizes are same. 

\begin{table*}[h]
\small
\centering
\caption{Results using different base models (left) and test time comparisons (right).}
\resizebox{1.0\linewidth}{!}{\begin{tabular}{|c|ccc|}
\Xhline{2\arrayrulewidth}
\Xhline{0.1pt}
Datasets  & MUTAG      & BGS       & DBLP      \\ \hline\hline
Metric             & \multicolumn{3}{c|}{Accuracy (\%)} \\ \hline\hline
HALO               & 86.17      & 93.10     & 96.30     \\
HALO w/ MLP $f(\mbX;\mbW_s)$ & 85.58      & 93.79     & 94.75     \\ 
\Xhline{2\arrayrulewidth}
\Xhline{0.1pt}
\end{tabular}
\hspace{.1in}
\begin{tabular}{|c|cc|}
\Xhline{2\arrayrulewidth}
\Xhline{0.1pt}
Datasets & MUTAG            & AIFB            \\ \hline\hline
Metric  & \multicolumn{2}{c|}{Time (second)} \\ \hline\hline
R-GCN   & 0.8424           & 1.8403          \\
HALO    & 0.7740           & 1.7065          \\ 
\Xhline{2\arrayrulewidth}
\Xhline{0.1pt}
\end{tabular}}\label{table:Additional_results}
\vspace{-.12in}
\end{table*}

\subsection{Results with standard error}
Table \ref{table:Result} reproduces the results from Table 1 in the main paper, but with the average accuracy and corresponding standard deviations across 5 runs included.

\begin{table}[t]
\small
\centering
\caption{Results on HGB (left) and knowledge graphs (right). The results are averaged over 5 runs, with standard deviations included.}
\resizebox{1.0\linewidth}{!}{\begin{tabular}{|c|cccc|}
\Xhline{2\arrayrulewidth}
\Xhline{0.1pt}
Dataset    & DBLP            & IMDB            & ACM             & Freebase        \\ \hline\hline
Metric     & \multicolumn{4}{c|}{Accuracy (\%)}                                    \\ \hline\hline
R-GCN      & 92.07$\pm$ 0.50 & 62.05$\pm$ 0.15 & 91.41$\pm$ 0.75 & 58.33$\pm$ 1.57 \\
HAN        & 92.05$\pm$ 0.62 & 64.63$\pm$ 0.58 & 90.79$\pm$ 0.43 & 54.77$\pm$ 1.40 \\
HGT        & 93.49$\pm$ 0.25 & 67.20$\pm$ 0.57 & 91.00$\pm$ 0.76 & 60.51$\pm$ 1.16 \\
Simple-HGN & 94.46$\pm$0.22  & 67.36$\pm$0.57  & 93.35$\pm$0.45  & \textbf{66.29$\pm$0.45}  \\ \hline
HALO       & \textbf{96.30$\pm$0.46}  & \textbf{76.20$\pm$0.77}  & \textbf{94.33$\pm$1.00}  & 62.06$\pm$0.74  \\ \hline
\end{tabular}
\hspace{.1in}
\begin{tabular}{|c|cccc|}
\Xhline{2\arrayrulewidth}
\Xhline{0.1pt}
Dataset & AIFB           & MUTAG          & BGS            & AM             \\ \hline\hline
Metric  & \multicolumn{4}{c|}{Accuracy (\%)}                                \\ \hline\hline
Feat    & 55.55$\pm$0.00 & 77.94$\pm$0.00 & 72.41$\pm$0.00 & 66.66$\pm$0.00 \\
WL      & 80.55$\pm$0.00 & 80.88$\pm$0.00 & 86.20$\pm$0.00 & 87.37$\pm$0.00 \\
RDF2Vec & 88.88$\pm$0.00 & 67.20$\pm$1.24 & 87.24$\pm$0.89 & 88.33$\pm$0.61 \\
R-GCN   & 95.83$\pm$0.62 & 73.23$\pm$0.48 & 83.10$\pm$0.80 & 89.29$\pm$0.35 \\ \hline
HALO    & \textbf{96.11$\pm$2.22} & \textbf{86.17$\pm$1.18} & \textbf{93.10$\pm$2.18} & \textbf{90.20$\pm$1.20} \\ 
\Xhline{2\arrayrulewidth}
\Xhline{0.1pt}\hline
\end{tabular}}\label{table:Result}
\vspace{-.12in}
\end{table}

\section{Details on Experiment Settings}

\begin{table}[h]
\small
\centering
\caption{Model Hyperparameters for HALO}
\resizebox{0.8\linewidth}{!}{\begin{tabular}{c|cccccccc}
\Xhline{2\arrayrulewidth}
\Xhline{0.1pt}
Datasets                                                     & DBLP      & IMDB      & ACM       & Freebase  & AIFB      & MUTAG     & BGS       & AM        \\ \hline\hline
\begin{tabular}[c]{@{}c@{}}Hidden Layer\\  Size\end{tabular} & 256       & 64        & 32        & 32        & 16        & 16        & 16        & 16        \\
Learning Rate                                                & $10^{-4}$ & $10^{-3}$ & $10^{-2}$ & $10^{-2}$ & $10^{-3}$ & $10^{-3}$ & $10^{-2}$ & $10^{-2}$ \\
Weight Decay                                                 & $10^{-5}$ & $10^{-5}$ & $10^{-4}$ & $10^{-3}$ & $10^{-5}$ & $10^{-4}$ & $10^{-5}$ & $10^{-4}$ \\
K                                                            & 8         & 32        & 32        & 4         & 16        & 16        & 8         & 4         \\
$\lambda$                                                    & 1         & 1         & 0.1       & 1         & 1         & 0.01      & 0.1       & 1         \\
$\alpha$                                                     & 1         & 1         & 0.1       & 1         & 0.1       & 1         & 1         & 1         \\ 
\Xhline{2\arrayrulewidth}
\Xhline{0.1pt}
\end{tabular}}\label{table:Hyperparameters}
\vspace{-.15in}
\end{table}

\subsection{Hyperparameters for the experimental results from Table 1 (Manuscript)}
In the all experiments, we used Adam optimizer \cite{(Adam)Kingma2015} and dropout as regularization with dropout rate 0.5. For other hyperparameters, please refer to Table \ref{table:Hyperparameters}.

\subsection{Categorization in DBLP and Academic datasets}

To carry out experiments for ZooBP, we modified DBLP and Academic datasets as mentioned in the main text. Here, we provide details on the categorization of venues in these datasets. Each venue is the name of academic conference.

\noindent\textbf{DBLP} dataset has 20 venues: AAAI, CVPR, ECML, ICML, IJCAI, SIGMOD, VLDB, EDBT, ICDE, PODS, ICDM, KDD, PAKDD, PKDD, SDM, CIKM, CIR, SIGIR, WSDM, WWW.  We categorize the above venues to 4 categories: \textbf{ML}  (AAAI, CVPR, ECML, ICML, IJCAI), \textbf{DB} (SIGMOD, VLDB, EDBT, ICDE, PODS), \textbf{DM} (ICDM, KDD, PAKDD, PKDD, SDM), and \textbf{IR}(CIKM, CIR, SIGIR, WSDM, WWW).

\noindent\textbf{Academic} dataset has 18 venues: ICML, AAAI, IJCAI, CVPR, ICCV, ECCV, ACL, EMNLP, NAACL, KDD, WSDM, ICDM, SIGMOD, VLDB, ICDE, WWW, SIGIR, CIKM. We categorize the above venues to 4 categories: \textbf{ML} (ICML, AAAI, IJCAI), \textbf{Vision} (CVPR, ICCV, ECCV), \textbf{NLP} (ACL, EMNLP, NAACL), and \textbf{Data} (KDD, WSDM, ICDM, SIGMOD, VLDB, ICDE, WWW, SIGIR, CIKM).

\begin{figure}[htb]
    \centering
    \includegraphics[width=0.9\textwidth]{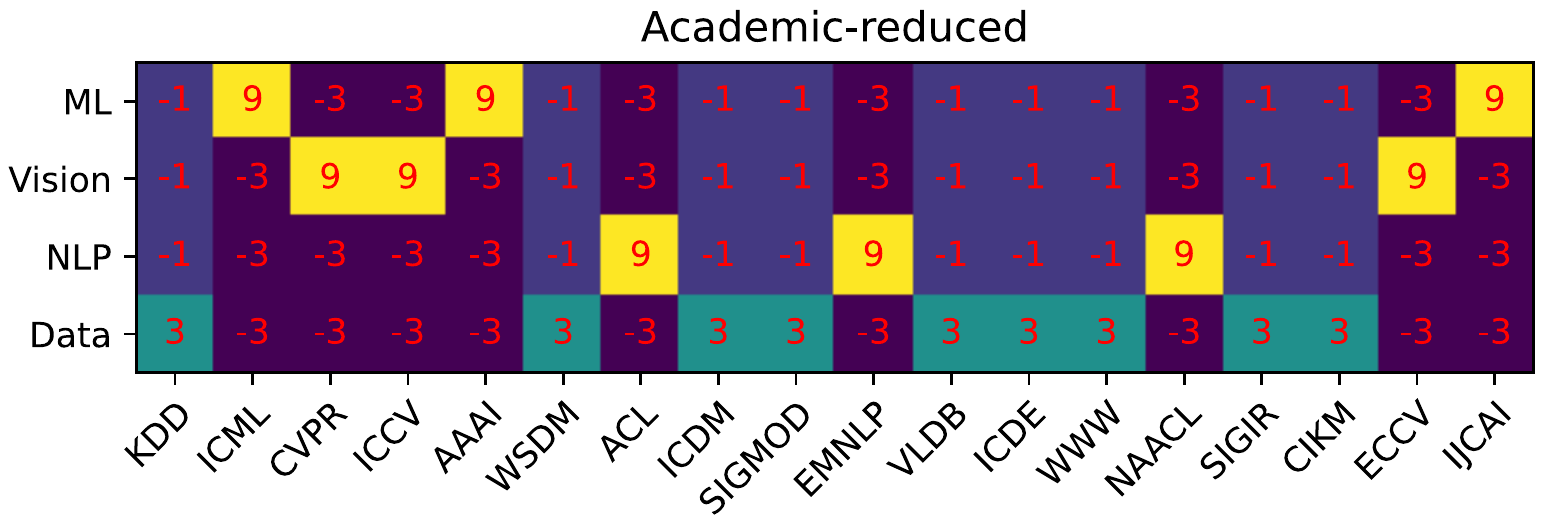}
    \vspace{-.1in}
    \caption{Compatibility matrix used in Academic-reduced datsaet}\label{fig:comp_matrix_academic}
    \centering
    \includegraphics[width=0.9\textwidth]{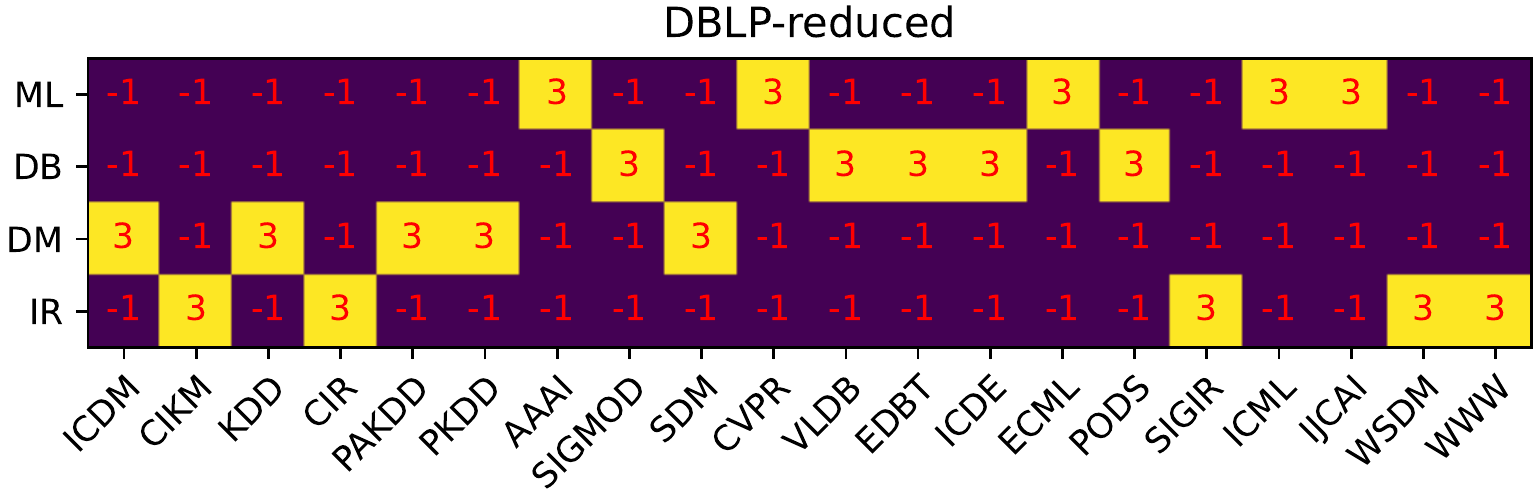}
    \vspace{-.1in}
    \caption{Compatibility matrix used in DBLP-reduced datsaet}\label{fig:comp_matrix_dblp}
        \vspace{-.1in}
\end{figure}

For choosing the compatibility matrix of ZooBP, based on the category of each venue, we give high positive value on $H(i,j)$ if $j$-th venue corresponds to $i$-th venue. For example, ``AAAI'' in DBLP dataset belongs to ``ML'' category. Therefore, we give high positive value on $H(\text{``ML''}, \text{``AAAI''})$. Otherwise, we give negative value to satisfy the residual condition of the compatibility matrix.  The results for the Academic and DBLP data are shown in Figures \ref{fig:comp_matrix_academic} and \ref{fig:comp_matrix_dblp} respectively.

\section{Limitations and Potential Negative Social Impact}
One limitation is that we have thus far not integrated HALO with large-scale sampling, which would allow us to apply it to huge graphs. And a potential negative societal impact is that as the node classification accuracy with heterogeneous graphs significantly improves with models like HALO, it could be maliciously used to improve the quality of the recommendation of socially damaging products on the Internet (e.g., dangerous weapons or harmful videos on streaming websites).

\end{document}